\newtheorem{proposition}{Proposition}
\newtheorem{lemma}{Lemma}
\newtheorem{corollary}{Corollary}
\newtheorem{remark}{Remark}
\newtheorem{example}{Example}
\newtheorem{assumption}{Assumption}
\providecommand{\eref}[1]{\eqref{#1}}  
\providecommand{\cref}[1]{Chapter~\ref{#1}}
\providecommand{\fref}[1]{Figure~\ref{#1}}
\providecommand{\R}{\ensuremath{\mathbb{R}}}
\providecommand{\I}{\ensuremath{\mathbb{I}}}
\providecommand{\Pb}{\ensuremath{\mathbb{P}}}
\providecommand{\bydef}{\overset{\text{def}}{=}}
\renewcommand{\vec}[1]{\ensuremath{\boldsymbol{#1}}}
\providecommand{\mat}[1]{\ensuremath{\boldsymbol{#1}}}
\providecommand{\calN}{\mathcal{N}}
\providecommand{\calP}{\mathcal{P}}
\providecommand{\mD}{\mat{D}}
\providecommand{\mI}{\mat{I}}
\providecommand{\mP}{\mat{P}}
\providecommand{\mW}{\mat{W}}
\providecommand{\ve}{\vec{e}}
\providecommand{\vp}{\vec{p}}
\providecommand{\vu}{\vec{u}}
\providecommand{\vw}{\vec{w}}
\providecommand{\vx}{\vec{x}}
\providecommand{\vy}{\vec{y}}
\providecommand{\vz}{\vec{z}}
\providecommand{\mSigma}{\mat{\Sigma}}
\providecommand{\mTheta}{\mat{\Theta}}
\providecommand{\mSigmaNLM}{\mat{\Sigma}_{\mathrm{NLM}}}
\providecommand{\mSigmahat}{\mat{\widehat{\Sigma}}}
\providecommand{\vmu}{\vec{\mu}}
\providecommand{\zhat}{\widehat{z}}
\providecommand{\sigmahat}{\widehat{\sigma}}
\providecommand{\vzhat}{\boldsymbol{\widehat{z}}}
\providecommand{\vone}{\vec{1}}
\newcommand{\argmin}[1]{\mathop{\underset{#1}{\mbox{argmin}}}}
\newcommand{\diag}[1]{\mathop{\mathrm{diag}\left\{#1\right\}}}
\newcommand{\trace}[1]{\mathop{\mathrm{Tr}\left\{#1\right\}}}
\title{Understanding Symmetric Smoothing Filters:\\ A Gaussian Mixture Model Perspective}
\author{Stanley~H.~Chan,~\IEEEmembership{Member,~IEEE}, Todd~Zickler,~\IEEEmembership{Member,~IEEE}, and Yue~M.~Lu,~\IEEEmembership{Senior Member,~IEEE}
\thanks{S.~H.~Chan is with the School of Electrical and Computer Engineering, and the Department of Statistics, Purdue University, West Lafayette, IN 47907, USA. Email: \texttt{stanleychan@purdue.edu}.}
\thanks{T.~Zickler and Y.~M.~Lu are with the John A. Paulson School of Engineering and Applied Sciences, Harvard University, Cambridge, MA 02138, USA. E-mails: \texttt{\{zickler,yuelu\}@seas.harvard.edu}.}
\thanks{S.~H.~Chan completed part of this work at Harvard University in 2012-2014. This work was supported in part by the Croucher Foundation Postdoctoral Research Fellowship, and in part by the U.S. National Science Foundation under Grant CCF-1319140. Preliminary material in this paper was presented at the IEEE International Conference on Image Processing (ICIP), Quebec City, Sep 2015.}}
\begin{document}
\maketitle

\begin{abstract}
Many patch-based image denoising algorithms can be formulated as applying a smoothing filter to the noisy image. Expressed as matrices, the smoothing filters must be row normalized so that each row sums to unity. Surprisingly, if we apply a column normalization before the row normalization, the performance of the smoothing filter can often be significantly improved. Prior works showed that such performance gain is related to the Sinkhorn-Knopp balancing algorithm, an iterative procedure that symmetrizes a row-stochastic matrix to a doubly-stochastic matrix. However, a complete understanding of the performance gain phenomenon is still lacking.

In this paper, we study the performance gain phenomenon from a statistical learning perspective. We show that Sinkhorn-Knopp is equivalent to an Expectation-Maximization (EM) algorithm of learning a Gaussian mixture model of the image patches. By establishing the correspondence between the steps of Sinkhorn-Knopp and the EM algorithm, we provide a geometrical interpretation of the symmetrization process. This observation allows us to develop a new denoising algorithm called Gaussian mixture model symmetric smoothing filter (GSF). GSF is an extension of the Sinkhorn-Knopp and is a generalization of the original smoothing filters. Despite its simple formulation, GSF outperforms many existing smoothing filters and has a similar performance compared to several state-of-the-art denoising algorithms.
\end{abstract}

\begin{keywords}
Non-local means, patch-based filtering, patch prior, Expectation-Maximization, doubly-stochastic matrix, symmetric smoothing filter
\end{keywords}

\section{Introduction}
\label{sec:introduction}

Smoothing filters are a class of linear and nonlinear operators that gains significant attentions in image denoising recently. The formulations of these operators are simple: Consider a noisy observation $\vy \in \R^n$ of a clean image $\vz \in \R^n$ corrupted by additive i.i.d. Gaussian noise. A smoothing filter is a matrix $\mW \in \R^{n \times n}$ that generates a denoised estimate $\vzhat$ as
\begin{equation}
\vzhat = \mD^{-1}\mW \vy,
\label{eq:standard nlm}
\end{equation}
where $\mD \bydef \diag{\mW\vone}$ is a diagonal matrix for normalization so that each row of $\mD^{-1}\mW$ sums to unity. The formulation in \eref{eq:standard nlm} is very general, and many denoising algorithms are smoothing filters, e.g., Gaussian filter~\cite{Wand_Jones_1995}, bilateral filter~\cite{Paris_Durand_2009}, non-local means (NLM)~\cite{Buades_Coll_2005_Journal}, locally adaptive regression kernel (LARK)~\cite{Takeda_Farsiu_Milanfar_2007}, etc. Note that some of these filters are linear (e.g., Gaussian filter) whereas some are nonlinear (e.g. non-local means). There are interesting graph-theoretic interpretations of the smoothing filters \cite{Milanfar_2013a,Talebi_Zhu_Milanfar_2013,Meyer_Shen_2012,Peyre_Bougleux_Cohen_2008,Peyre_2009,Lezoray_Grady_2012}, and there are also fast algorithms to compute the smoothing filters \cite{Chan_Zickler_Lu_2014, Talebi_Milanfar_2014, Mahmoudi_Sapiro_2005, Adams_Gelfand_2009, Gastal_Oliveira_2012, Bhujle_Chaudhuri_2014}.

\subsection{Motivation: A Surprising Phenomenon}
While smoothing filters work well for many denoising problems, it was observed in \cite{Milanfar_2013b,Chan_Zickler_Lu_2013,Chan_Zickler_Lu_2015} that their performance can be further improved by modifying \eref{eq:standard nlm} as
\begin{equation}
\vzhat = \mD_r^{-1} \mW \mD_c^{-1} \vy,
\label{eq:sinkhorn one step}
\end{equation}
where $\mD_c \bydef \diag{\mW^T\vone}$ is a diagonal matrix that normalizes the \emph{columns} of $\mW$, and $\mD_r \bydef \diag{ \mW \mD_c^{-1}\vone }$ is a diagonal matrix that normalizes the \emph{rows} of $\mW \mD_c^{-1}$. In other words, we modify \eref{eq:standard nlm} by introducing a column normalization step before applying the row normalization.

Before discussing the technical properties of \eref{eq:standard nlm} and \eref{eq:sinkhorn one step}, we first provide some numerical results to demonstrate an interesting phenomenon. In \fref{fig:psnr_gain}, we crop the center $100\times 100$ region of 10 standard clean images. We generate noisy observations by adding i.i.d. Gaussian noise of standard deviation $\sigma=20/255$ to each clean image. These noisy images are then denoised by \eref{eq:standard nlm} and \eref{eq:sinkhorn one step}, respectively. The weight matrix $\mW$ is chosen as the one defined in the non-local means (NLM) \cite{Buades_Coll_2005_Journal}. To ensure fair comparison, we choose the best parameter $h_r$, the range parameter in NLM, for both methods. The patch size is set as $5 \times 5$ and the neighborhood search window is set as $21 \times 21$. The experiment is repeated for 20 independent Monte-Carlo trials to average out the randomness caused by different realizations of the i.i.d. Gaussian noise.

The results of this experiment are shown at the bottom of \fref{fig:psnr_gain}. It is perhaps a surprise to see that \eref{eq:sinkhorn one step}, which is a simple modification of \eref{eq:standard nlm}, improves the PSNR by more than 0.23 dB on average. Another puzzling observation is that if we repeatedly apply the column-row normalization, the PSNR does not always increase as more iterations are used. \fref{fig:sinkhorn_experiment} presents the result. In this experiment, we fix the NLM parameter $h_r$ to its optimal value when using the column-row normalization, i.e., \eref{eq:sinkhorn one step}. For 5 out of the 10 images we tested, the PSNR values actually drop after the first column-row normalization.

\begin{figure*}[t]
\centering
\begin{tabular}{c}
\includegraphics[width=1\linewidth]{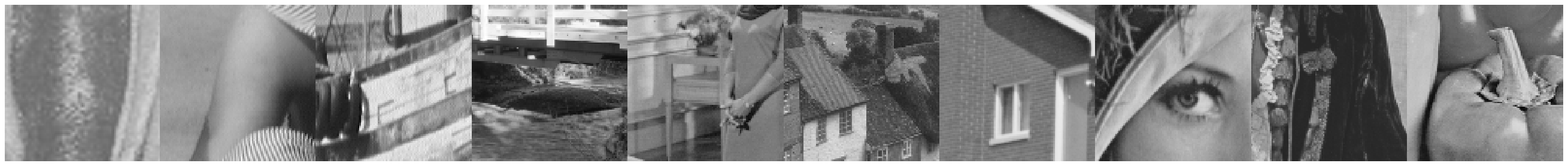}\\
\begin{tabular}{ccccccccccc}
\hline
                             &   \multicolumn{10}{c}{Image No.} \\
Smoothing Filter             &   1       &   2       &   3       &   4       &   5       &   6       &   7       &   8       &   9       &   10\\
\hline\hline
$\mD^{-1}\mW$	             & 30.54 	& 31.42 	& 27.41 	& 26.81 	& 27.89 	& 27.01 	& 30.54 	& 29.16 	& 28.41 	& 29.02 \\
$h_r$                        & 0.64$\sigma$ 	& 0.69$\sigma$ 	& 0.69$\sigma$ 	& 0.71$\sigma$ 	& 0.64$\sigma$ 	
                             & 0.64$\sigma$ 	& 0.71$\sigma$ 	& 0.69$\sigma$ 	& 0.71$\sigma$ 	& 0.71$\sigma$\\
\hline
$\mD_r^{-1}\mW\mD_c^{-1}$	 & 30.64 	& 31.76 	& 27.64 	& 26.98 	& 28.18 	& 27.08 	& 30.87 	& 29.44 	& 28.47 	& 29.39 \\
$h_r$                        & 0.71$\sigma$ 	& 0.77$\sigma$ 	& 0.77$\sigma$ 	& 0.77$\sigma$ 	& 0.71$\sigma$ 	
                             & 0.71$\sigma$ 	& 0.79$\sigma$ 	& 0.77$\sigma$ 	& 0.77$\sigma$ 	& 0.79$\sigma$\\
\hline
PSNR Improvement             & +0.10 	& +0.34 	& +0.23 	& +0.17 	& +0.29 	& +0.07 	& +0.33 	& +0.28 	& +0.05 	& +0.37 \\
\hline
\end{tabular}
\end{tabular}
\caption{[Top] $100 \times 100$ testing images. Each image is corrupted by i.i.d Gaussian noise of $\sigma = 20/255$. [Bottom] PSNR values of the denoised image using $\mD^{-1}\mW$ and $\mD_r^{-1}\mW\mD_c^{-1}$, and the respective optimal NLM parameter $h_r$. All PSNR values in this table are averaged over 20 independent Monte-Carlo trials.}
\label{fig:psnr_gain}
\end{figure*}

The above experiment piqued our curiosity and led us to a basic question: Why would the column-row normalization improve the denoising performance? Insights gained from a better understanding of the underlying mechanism could potentially lead to a more systematic procedure that can generalize the operations in \eref{eq:sinkhorn one step} and further improve the denoising performance. The goal of this paper is to address this issue and propose a new algorithm.

\subsection{Sinkhorn-Knopp Balancing Algorithm}
To the best of our knowledge, the above performance gain phenomenon was first discussed by Milanfar in \cite{Milanfar_2013b}, where it was shown that if we repeatedly apply the column-row normalization we would obtain an iterative procedure called the Sinkhorn-Knopp balancing algorithm \cite{Sinkhorn_1964,Sinkhorn_Knopp_1967} (or Sinkhorn-Knopp, for short.) As illustrated in Algorithm~\ref{alg:sinkhorn}, Sinkhorn-Knopp is a simple algorithm that repeatedly applies the column and row normalization until the smoothing filter converges. For example, the smoothing filter defined by \eref{eq:sinkhorn one step} is the result of applying Sinkhorn-Knopp for one iteration.

\begin{figure}[t]
\centering
\includegraphics[width=0.95\linewidth]{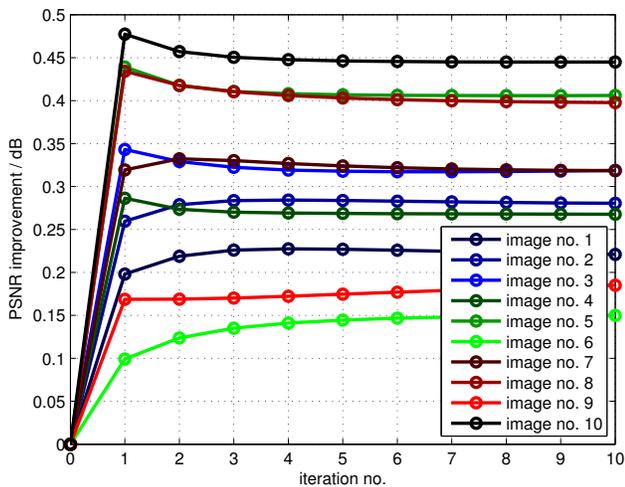}
\caption{Extension of the experiment shown in \fref{fig:psnr_gain}. The PSNR values do not always increase as more Sinkhorn-Knopp iterations are used. The curves are averaged over 20 independent Monte-Carlo trials with different noise realizations.}
\label{fig:sinkhorn_experiment}
\end{figure}

\begin{algorithm}[t]
\caption{Sinkhorn-Knopp Balancing Algorithm}
\begin{algorithmic}
\STATE Input: $\mW^{(0)}$
\WHILE{ $\| \mW^{(t+1)} - \mW^{(t)} \|_F > \mathtt{tol}$ }
    \STATE $\mD_c = \diag{ (\mW^{(t)})^T \vone}$ \hfill \% Column Normalize
    \STATE $\mD_r = \diag{ \mW^{(t)} \mD_c^{-1} \vone}$ \hfill \% Row Normalize
    \STATE $\mW^{(t+1)} = \mD_r^{-1}\mW^{(t)}\mD_c^{-1}$
\ENDWHILE
\end{algorithmic}
\label{alg:sinkhorn}
\end{algorithm}

Sinkhorn-Knopp has many interesting properties. First, when Sinkhorn-Knopp converges, the converging limit is a doubly-stochastic matrix --- a symmetric non-negative matrix with unit column and row (also called a \emph{symmetric} smoothing filter.) A doubly stochastic matrix has all of its eigenvalue's magnitudes bounded in $[0,1]$ so that repeated multiplications always attenuate the eigenvalues \cite{Baju_Hastie_Tibshirani_1989}. Moreover, the estimate $\vzhat$ formed by a doubly stochastic matrix is admissible in the sense that no other estimates are uniformly better \cite{Cohen_1966}.

To explain the performance gain, Milanfar \cite{Milanfar_2013b} considered a notion called ``effective degrees of freedom'', defined as
\begin{equation*}
\mathrm{df} = \sum_{j=1}^n \frac{\partial \widehat{z}_j}{\partial y_j},
\end{equation*}
where $\widehat{z}_j$ is the $j$th pixel of the estimate and $y_j$ is the $j$th pixel of the input. $\mathrm{df}$ measures how an estimator trades bias against variance. Larger values of $\mathrm{df}$ imply a lower bias but higher variance. Milanfar argued that the overall mean squared error, which is the sum of the bias and the variance, is reduced because one can prove that symmetric smoothing filters have high effective degrees of freedom. However, effective degrees of freedom is not easy to interpret. It will be more useful if we can geometrically describe the actions to which the column-row normalization are applying.

\subsection{Contributions}
The present paper is motivated by our wish to further understand the mechanism behind the performance gain phenomenon. Our approach is to study an Expectation-Maximization (EM) algorithm for learning a Gaussian mixture model (GMM.) By analyzing the E-step and the M-step of the EM algorithm, we find that the actions of the symmetrization is a type of data \emph{clustering}. This observation echoes with a number of recent work that shows ordering and grouping of non-local patches are key to high-quality image denoising \cite{Ram_Elad_Cohen_2014,Zoran_Weiss_2011,Taylor_Meyer_2012}. There are two contributions of this paper, described as follows.

First, we generalize the symmetrization process by reformulating the denoising problem as a maximum-a-posteriori (MAP) estimation under a Gaussian mixture model. We show that the original smoothing filter in \eref{eq:standard nlm}, the one-step Sinkhorn-Knopp in \eref{eq:sinkhorn one step}, and the full Sinkhorn-Knopp (i.e., iterative applications of Sinkhorn-Knopp until convergence) are all sub-routines of the EM algorithm to learn the GMM. By showing that each method supersedes its preceding counterpart, we provide a possible explanation for the performance gain phenomenon.

Second, based on the analysis of the GMM, we propose a new denoising algorithm called the GMM symmetric smoothing filter (GSF). We show that GSF does not only subsume a number of smoothing filters, but also has a performance similar to some state-of-the-art denoising algorithms. We will discuss implementation and parameter selections for the GSF algorithm.

This paper is an extension of a conference article presented in \cite{Chan_Zickler_Lu_2015}. In \cite{Chan_Zickler_Lu_2015}, the linkage between the GMM and the MAP denoising step was not thoroughly discussed. Specifically, the MAP was formulated as a weighted least squares step but the weights were indirectly obtained through a by-product of the EM algorithm. In this paper, we show that the quadratic cost function in the weighted least squares is indeed a surrogate function for solving the MAP problem. Therefore, minimizing the cost function of the weighted least squares is equivalent to minimizing an upper bound of the MAP objective function.

The rest of the paper is organized as follows. First, we provide a brief introduction to GMM and the EM algorithm in Section~\ref{sec:preliminary}. In Section~\ref{sec:generalization} we discuss the generalization of different symmetrizations using the EM algorithm. The new GSF is discussed in Section~\ref{sec:gm_nlm} and experimental results are shown in Section~\ref{sec:experiments}. We conclude in Section~\ref{sec:conclusion}.

\section{Preliminaries}
\label{sec:preliminary}

\subsection{Notations}
Throughout this paper, we use $n$ to denote the number of pixels in the noisy image, and $k$ to denote the number of mixture components in the GMM model. To avoid ambiguity, we call a mixture component of a GMM as a cluster. Clusters are tracked using the running index $i \in  \{1,\ldots,k\}$, whereas patches (or pixels) are tracked using the running index $j \in \{1,\ldots,n\}$. Without loss of generality, we assume that all pixel intensity values have been normalized to the range $[0, \, 1]$.

We use bold letters to denote vectors and the symbol $\vone$ to denote a constant vector of all ones. The vector $\vx_j \in \R^2$ represents the two-dimensional spatial coordinate of the $j$th pixel, and the vector $\vy_j \in \R^d$ represents a $d$-dimensional patch centered at the $j$th pixel of the noisy image $\vy$. For a clean image $\vz$, the $j$th patch is denoted by $\vz_j$. A scalar $y_j \in \R$ refers to the intensity value of the center pixel of $\vy_j$. Therefore, a $d$-dimensional patch $\vy_j$ (assume $d$ is an odd number) is a vector $\vy_j = [y_{j-(d-1)/2},\ldots,y_j,\ldots,y_{j+(d-1)/2}]^T$. To extract $\vy_j$ from the noisy image $\vy$, we use a linear operator $\mP_j \in \R^{d \times n}$ such that $\vy_j = \mP_j \vy$. For some smoothing filters, the spatial coordinate $\vx_j$ is used together with a patch $\vz_j$ (or $\vy_j$). Therefore, for generality we define a \emph{generalized patch} $\vp_j \in \R^p$, which could be $\vx_j$, $\vz_j$ (or $\vy_j$), or a concatenation of both: $\vp_j = [\vx_j^T, \; \vz_j^T]^T$ (or $\vp_j = [\vx_j^T, \; \vy_j^T]^T$.)

\begin{table}[t]
\setlength{\extrarowheight}{10pt}
\centering
\caption{Popular choices of the smoothing filter $\mW$. }
\begin{tabular}{|m{3cm}|c|}
\hline\hline
Filter & $W_{ij}$\\
\hline\hline
Gaussian Filter  \cite{Wand_Jones_1995}                & $\exp\left\{-\frac{\|\vx_j - \vx_i\|^2}{2h_s^2}\right\}$\\
\hline
Bilateral Filter \cite{Paris_Durand_2009}              & $\exp\left\{- \left(\frac{\|\vx_j - \vx_i\|^2}{2h_s^2}+\frac{(y_j - y_i)^2}{2h_r^2}\right)\right\}$\\
\hline
NLM \cite{Buades_Coll_2005_Journal}                    & $\exp\left\{-\frac{\|\vy_j - \vy_i\|^2}{2h_r^2}\right\}$\\
\hline
Spatially Regulated NLM \cite{Milanfar_2013b}                     & $\exp\left\{-\left(\frac{\|\vx_j - \vx_i\|^2}{2h_s^2}+\frac{\|\vy_j - \vy_i\|^2}{2h_r^2}\right)\right\}$\\
\hline
LARK \cite{Takeda_Farsiu_Milanfar_2007}                & $\exp\left\{-\frac{1}{2}(\vx_j - \vx_i)^T \mSigma_{i}^{-1}(\vx_j - \vx_i)\right\}$\\
\hline\hline
\end{tabular}
\label{table:choice of W}
\vspace{-2ex}
\end{table}

\subsection{Smoothing Filters}
The results presented in this paper are applicable to smoothing filters $\mW$ taking the following form:
\begin{equation}
W_{ij} = \kappa_i \; \calN( \vp_j \,|\, \vmu_i, \mSigma_i),
\label{eq:wij}
\end{equation}
where $\kappa_i \bydef \sqrt{(2\pi)^p |\mSigma_i|}$ is a normalization constant, and $\calN(\cdot)$ denotes a $p$-dimensional Gaussian:
\begin{align}
\calN( \vp_j \,|\, \vmu_i, \mSigma_i) \bydef \frac{1}{\kappa_i} \exp\left\{-\frac{1}{2}\|\vp_j-\vmu_i\|_{\mSigma_i^{-1}}^2\right\},
\label{eq:kernel}
\end{align}
with mean $\vmu_i \in \R^{p}$ and covariance matrix $\mSigma_i \in \R^{p \times p}$. We note that \eref{eq:wij} is general and covers a number of widely used filters as shown in Table \ref{table:choice of W}. 

\begin{example}[Standard NLM]
For the standard NLM \cite{Buades_Coll_2005_Journal}, we have $\vp_j = \vy_j$. The $i$th mean vector is $\vmu_i = \vy_i$, i.e., the noisy patch is the mean vector. The $i$th covariance matrix
is
\begin{equation}
\mSigma_i = h_r^2\mI, \quad\quad \mbox{for all } i,
\label{eq:mSigma}
\end{equation}
where $h_r$ is the NLM parameter. 

In practice, to reduce computational complexity, a search window $\Omega$ is often introduced so that neighboring patches are searched within $\Omega$. This is equivalent to multiplying an indicator function to the weight as
\begin{equation}
W_{ij} = \I\left\{ (\vx_i-\vx_j) \in \Omega \right\} \exp\left\{-\frac{\|\vy_j - \vy_i\|^2}{2h_r^2}\right\},
\label{eq:nlm indicator}
\end{equation}
where $\I\left\{ \vx \in \Omega \right\} = 1$ if $\vx \in \Omega$, and is zero otherwise. 
\end{example}

\begin{example}[Spatially Regulated NLM]
\label{example:nlm}
As an alternative to the hard thresholding introduced by the indicator function in \eref{eq:nlm indicator}, one can also consider the spatially regulated NLM \cite{Milanfar_2013b}. In this case, we can define $\vp_j = \begin{bmatrix} \vx_j^T, \; \vy_j^T \end{bmatrix}^T$. The mean vector $\vmu_i$ and the covariance matrices will consist of two parts:
\begin{equation}
\vmu_i =
\begin{bmatrix}
\vmu_i^{(s)} \\
\vmu_i^{(r)}
\end{bmatrix},
\quad\quad
\mSigma_i = \begin{bmatrix} h_s^2\mI & 0 \\ 0 & h_r^2\mI \end{bmatrix} \bydef \mSigma_{\mathrm{NLM}},
\label{eq:mSigma}
\end{equation}
where $h_s$ and $h_r$ are the spatial and the range parameters, respectively. This leads to the weight
\begin{equation}
W_{ij} = \exp\left\{-\frac{\|\vx_j - \vx_i\|^2}{2h_s^2}\right\} \exp\left\{-\frac{\|\vy_j - \vy_i\|^2}{2h_r^2}\right\}.
\label{eq:nlm s}
\end{equation}
It is not difficult to see that the spatially regulated NLM coincides with the standard NLM as $h_s \rightarrow \infty$ and $|\Omega| \rightarrow \infty$. In fact, the former uses a soft search window and the latter uses a hard search window. From our experience, we find that the spatially regulated NLM typically has better performance than the standard NLM when the best choices of $h_s$ and $\Omega$ are used in either case. Therefore, in the rest of this paper we will focus on the spatially regulated NLM.
\end{example}

\subsection{Gaussian Mixture Model}
The Gaussian mixture model (GMM) plays an important role in this paper. Consider a set of $n$ generalized patches $\calP \bydef \{\vp_1,\ldots,\vp_n\}$ where $\vp_j \in \R^p$. We say that $\calP$ is generated from a Gaussian mixture model of $k$ clusters if $\vp_j$ is sampled from the distribution
\begin{equation}
f(\vp_j \,|\, \mTheta) = \sum_{i=1}^k \pi_i \, \calN(\vp_j \,|\, \vmu_i,\mSigma_i),
\label{eq:gmm}
\end{equation}
where $\pi_i \in \R$ is the weight of the $i$th cluster, $\vmu_i \in \R^p$ is the mean vector, and $\mSigma_i \in \R^{p \times p}$ is the covariance matrix. For the purpose of analyzing smoothing filters in this paper, we shall assume that the covariance matrices $\mSigma_i$ are fixed according to the underlying smoothing filter. For example, in spatially regulated NLM we fix the covariance matrices as $\mSigma_i = \mSigma_{\mathrm{NLM}}$. When $\mSigma_i$'s are fixed, we denote $\mTheta \bydef \{\pi_i,\vmu_i\}_{i=1}^k$ as the GMM model parameters.

Learning the model parameters $\mTheta$ from $\calP$ is typically done using the Expectation-Maximization (EM) algorithm \cite{Gupta_Chen_2010}. The EM algorithm consists of two major steps: the expectation step (E-step) and the maximization step (M-step). The E-step is used to compute the conditional expected log-likelihood, often called the $Q$-function. The M-step is used to maximize the $Q$-function by seeking the optimal parameters $\mTheta$. The algorithm iterates until the log-likelihood converges. Since the EM algorithm is widely used, we skip the introduction and refer readers to \cite{Gupta_Chen_2010} for a comprehensive tutorial. The EM algorithm for learning a GMM is summarized in Algorithm~\ref{alg:em for gmm}.

\begin{algorithm}[t]
\caption{EM Algorithm for Learning a GMM with a known covariance matrix $\mSigma$ \cite{Gupta_Chen_2010}.}
\begin{algorithmic}
\STATE Input: Patches $\calP \bydef \{\vp_j\}_{j=1}^n$, and the number of clusters $k$.
\STATE Output: Parameters $\mTheta = \{(\pi_i,\vmu_i)\}_{i=1}^k$.
\STATE
\STATE Initialize $\pi_i^{(0)}$, $\vmu_i^{(0)}$ for $i = 1,\ldots,k$, and set $t = 0$.
\WHILE{not converge}
    \STATE \textbf{E-step}: Compute, for $i = 1,\ldots,k$ and $j = 1,\ldots,n$
        \begin{equation}
            \gamma_{ij}^{(t)} = \frac{ \pi_i^{(t)} \calN(\vp_j \,|\, \vmu_i^{(t)}, \mSigma) }{ \sum_{l=1}^k \pi_l^{(t)} \calN(\vp_j \,|\, \vmu_l^{(t)}, \mSigma) }
            \label{eq:em,gamma}
        \end{equation}
    \STATE \textbf{M-step}: Compute, for $i = 1,\ldots,k$
        \begin{align}
            \pi_i^{(t+1)}     &= \frac{ 1 }{ n } \sum_{j=1}^n \gamma_{ij}^{(t)} \label{eq:em,pi}\\
            \vmu_i^{(t+1)}    &= \frac{ \sum_{j=1}^n \gamma_{ij}^{(t)} \vp_j}{\sum_{j=1}^n \gamma_{ij}^{(t)}} \label{eq:em,mu}
        \end{align}
    \STATE \textbf{Update Counter}: $t \leftarrow t + 1$.
\ENDWHILE
\end{algorithmic}
\label{alg:em for gmm}
\end{algorithm}

\section{Generalizations of Symmetric Filters}
\label{sec:generalization}
In this section, we discuss how various symmetric smoothing filters are generalized by the EM algorithm. We begin by discussing how the GMM can be used for denoising.

\subsection{MAP Denoising Using GMM}
We first specify the denoising algorithm. Given the noisy image $\vy$, we formulate the denoising problem by using the maximum-a-posterior (MAP) approach:
\begin{align}
\vzhat
= \argmin{\vz} \;\; \frac{\lambda}{2} \|\vz - \vy\|^2 - \sum_{j=1}^n \log f(\vp_j \,|\, \mTheta)
\label{eq:epll}
\end{align}
where the first term specifies the data fidelity with $\lambda$ as a parameter. The second term, which is a sum of overlapping patches (thus are dependent), is called the expected patch log-likelihood (EPLL) \cite{Zoran_Weiss_2011}. Note that EPLL is a general prior that uses the expectation of the log-likelihood of overlapping patches. It is not limited to a particular distribution for $f(\vp_j \,|\, \mTheta)$, although in \cite{Zoran_Weiss_2011} the GMM was used. Note also that the patch $\vp_j$ in \eref{eq:epll} is extracted from the optimization variable $\vz$. Thus, by minimizing over $\vz$ we also minimize over $\vp_j$.

Substituting \eref{eq:gmm} into \eref{eq:epll}, we obtain a GMM-based MAP denoising formulation
\begin{align}
\vzhat = \argmin{\vz} \;\; \frac{\lambda}{2} \|\vz - \vy\|^2 - \sum_{j=1}^n \log \left(\sum_{i=1}^k \pi_i \calN(\vp_j \,|\, \vmu_i,\mSigma)\right).
\label{eq:epll gmm}
\end{align}
The computational challenge of \eref{eq:epll gmm} is the sum of exponentials inside the logarithm, which hinders closed-form solutions. In \cite{Zoran_Weiss_2011}, Zoran and Weiss proposed to use a half quadratic splitting method to alternatingly minimize a sequence of subproblems, and select the mode of the GMM in each subproblem.

Our solution to handle the optimization problem in \eref{eq:epll gmm} is to use a surrogate function under the Majorization-Maximization framework \cite{Hunter_Lange_2004}. The idea is to find a surrogate function $h(\vp_j, \, \vp_j' \,|\, \mTheta)$ such that
\begin{align}
h(\vp_j, \, \vp_j' \,|\, \mTheta)  &\ge -\log f(\vp_j \,|\, \mTheta), \quad \forall (\vp_j,\vp_j'), \label{eq:h condition1}\\
h(\vp_j, \, \vp_j  \,|\, \mTheta)  &=   -\log f(\vp_j \,|\, \mTheta). \label{eq:h condition2}
\end{align}
If we can find such function $h(\vp_j, \, \vp_j' \,|\, \mTheta)$, then the minimization in \eref{eq:epll} can be relaxed to minimizing an upper bound
\begin{align}
\vzhat = \argmin{\vz} \;\; \frac{\lambda}{2} \|\vz - \vy\|^2 + \sum_{j=1}^n h(\vp_j, \, \vp_j'  \,|\, \mTheta). \label{eq: surrogate minimization}
\end{align}
For the GMM in \eref{eq:gmm}, Zhang \emph{et al.} \cite{Zhang_Ye_Pal_2016} proposed one convenient surrogate function $h(\vp_j, \, \vp_j' \,|\, \mTheta)$ whose expression is given in Lemma~\ref{lemma:surrogate}.
\begin{lemma}[Surrogate function for GMM]
\label{lemma:surrogate}
The function
\begin{align}
&h(\vp_j, \, \vp_j' \,|\, \mTheta)
\bydef -\log f(\vp_j' \,|\, \mTheta) \label{eq: h}\\
&\quad\quad + \sum_{i=1}^k \gamma_{ij}\left( -\frac{1}{2}\|\vp_j-\vmu_i\|^2_{\mSigma^{-1}} + \frac{1}{2}\|\vp_j'-\vmu_i\|^2_{\mSigma^{-1}}\right), \notag
\end{align}
where
\begin{equation}
\gamma_{ij} = \frac{ \pi_i \calN(\vp_j' \,|\, \vmu_i, \mSigma) }{ \sum_{l=1}^k \pi_l \calN(\vp_j' \,|\, \vmu_l, \mSigma) },
\label{eq:gamma}
\end{equation}
is a surrogate function that satisfies \eref{eq:h condition1} and \eref{eq:h condition2}.
\end{lemma}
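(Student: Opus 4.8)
The approach is to establish the two majorizer conditions \eref{eq:h condition1}--\eref{eq:h condition2} directly from the explicit formula in \eref{eq: h}; the single substantive ingredient is Jensen's inequality for the concave function $\log(\cdot)$, applied with the responsibilities $\gamma_{ij}$ of \eref{eq:gamma} as the convex weights (they are nonnegative, sum to one over $i$, and are frozen at the anchor patch $\vp_j'$). A fact I would carry through the whole argument is that every cluster shares the same covariance $\mSigma$, so the normalizer $\kappa$ in \eref{eq:kernel} is common to all $i$; this is what lets the extraneous constants cancel. Condition \eref{eq:h condition2} is then immediate: setting $\vp_j'=\vp_j$ in \eref{eq: h} makes the bracketed pair of quadratics in each summand cancel, leaving $h(\vp_j,\vp_j\,|\,\mTheta)=-\log f(\vp_j\,|\,\mTheta)$, independently of the value taken by $\gamma_{ij}$.

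For \eref{eq:h condition1} I would first rewrite it as a scalar inequality. Replacing each Mahalanobis term by a log-density through $\calN(\vp\,|\,\vmu_i,\mSigma)=\kappa^{-1}\exp\{-\tfrac12\|\vp-\vmu_i\|_{\mSigma^{-1}}^2\}$ (the two $\log\kappa$'s inside each $\gamma_{ij}$-summand cancel) and moving $-\log f(\vp_j'\,|\,\mTheta)$ across, \eref{eq:h condition1} becomes equivalent to
\[
\log\frac{f(\vp_j\,|\,\mTheta)}{f(\vp_j'\,|\,\mTheta)}\;\ge\;\sum_{i=1}^{k}\gamma_{ij}\log\frac{\calN(\vp_j\,|\,\vmu_i,\mSigma)}{\calN(\vp_j'\,|\,\vmu_i,\mSigma)}.
\]

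The key step is the identity, immediate from \eref{eq:gamma},
\[
\sum_{i=1}^{k}\gamma_{ij}\,\frac{\calN(\vp_j\,|\,\vmu_i,\mSigma)}{\calN(\vp_j'\,|\,\vmu_i,\mSigma)}=\frac{1}{f(\vp_j'\,|\,\mTheta)}\sum_{i=1}^{k}\pi_i\calN(\vp_j\,|\,\vmu_i,\mSigma)=\frac{f(\vp_j\,|\,\mTheta)}{f(\vp_j'\,|\,\mTheta)},
\]
so the left-hand side of the displayed inequality is the logarithm of a $\gamma_{\cdot j}$-weighted average of the ratios $r_i\bydef\calN(\vp_j\,|\,\vmu_i,\mSigma)/\calN(\vp_j'\,|\,\vmu_i,\mSigma)$, while the right-hand side is the same weighted average of $\log r_i$. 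Concavity of $\log$ (Jensen) delivers the inequality, and equality holds exactly when all $r_i$ with $\gamma_{ij}>0$ agree, in particular at $\vp_j=\vp_j'$, consistent with \eref{eq:h condition2}.

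I expect the only friction to be clerical --- keeping track of the $\log\kappa$, $\log\pi_i$, $\log\gamma_{ij}$ and $\log f(\vp_j'\,|\,\mTheta)$ contributions in the rewrite and verifying that everything but the density-ratio terms telescopes away, which relies on $\sum_i\gamma_{ij}=1$ and the shared $\kappa$. A cleaner route that avoids pre-guessing the closed form is to begin from $-\log f(\vp_j\,|\,\mTheta)=-\log\sum_i\gamma_{ij}\bigl(\pi_i\calN(\vp_j\,|\,\vmu_i,\mSigma)/\gamma_{ij}\bigr)$, pull the sum outside by convexity of $-\log$, and then simplify using \eref{eq:gamma}; this reproduces the right-hand side of \eref{eq: h} exactly and yields both conditions at once. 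This is, up to an additive constant, the standard EM auxiliary-function bound, which is precisely why \eref{eq: surrogate minimization} collapses to a weighted least-squares problem --- the fact used throughout the rest of the paper. In the write-up I would keep whichever of the two derivations turns out shorter.
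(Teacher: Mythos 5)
Your argument is correct, and it is the standard one: the paper itself does not prove Lemma~\ref{lemma:surrogate} but defers to Appendix~A of the cited Zhang--Ye--Pal reference, and what that appendix contains is precisely the EM auxiliary-function bound you describe --- write $-\log f(\vp_j\,|\,\mTheta)=-\log\sum_i\gamma_{ij}\bigl(\pi_i\calN(\vp_j\,|\,\vmu_i,\mSigma)/\gamma_{ij}\bigr)$, apply Jensen with the weights $\gamma_{ij}$ of \eref{eq:gamma} (nonnegative, summing to one in $i$, frozen at $\vp_j'$), and simplify using the shared covariance so that the normalizers and the $\log\pi_i$, $\log\gamma_{ij}$ terms collect into $-\log f(\vp_j'\,|\,\mTheta)$. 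Your verification of \eref{eq:h condition2} by cancellation of the two quadratics at $\vp_j'=\vp_j$, and your identity $\sum_i\gamma_{ij}\,\calN(\vp_j\,|\,\vmu_i,\mSigma)/\calN(\vp_j'\,|\,\vmu_i,\mSigma)=f(\vp_j\,|\,\mTheta)/f(\vp_j'\,|\,\mTheta)$ feeding into concavity of $\log$, are both sound. One caveat you should make explicit rather than pass over: your ``equivalent scalar inequality'' corresponds to the surrogate $-\log f(\vp_j'\,|\,\mTheta)+\sum_i\gamma_{ij}\bigl(\tfrac12\|\vp_j-\vmu_i\|^2_{\mSigma^{-1}}-\tfrac12\|\vp_j'-\vmu_i\|^2_{\mSigma^{-1}}\bigr)$, i.e.\ with the signs opposite to those literally printed in \eref{eq: h}. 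As printed, the majorization \eref{eq:h condition1} actually fails (already for $k=1$ it reduces to $\calN(\vp_j\,|\,\vmu_1,\mSigma)\ge\calN(\vp_j'\,|\,\vmu_1,\mSigma)$ for all $\vp_j,\vp_j'$), so the printed signs are a typo; the sign convention you use is the one consistent with the subsequent reduction to the quadratic problem \eref{eq:gsf minimization 0}, where the term $+\sum_{i}\gamma_{ij}\|\vp_j-\vmu_i\|^2_{\mSigma^{-1}}$ must survive with a positive sign. So your proof establishes the lemma as intended and as used; just state the sign correction instead of flipping it silently, and note (as you implicitly assume) that components with $\gamma_{ij}=0$ are simply dropped from the Jensen average.
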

\begin{proof}
See Appendix A of \cite{Zhang_Ye_Pal_2016}.
\end{proof}

\begin{remark}
The surrogate function $h(\vp_j, \, \vp_j' \,|\, \mTheta)$ requires an intermediate variable $\vp_j'$. This variable can be chosen as $\vp_j' = [\vx_j^T, \vy_j^T]^T$, i.e., the generalized patch using the noisy image $\vy$. Thus, $\gamma_{ij}$ is independent of  $\vp_j$.
\end{remark}

Substituting the result of Lemma~\ref{lemma:surrogate} into \eref{eq: surrogate minimization}, we observe that \eref{eq: surrogate minimization} can be rewritten as
\begin{align}
\vzhat = \argmin{\vz} \;\; \lambda \|\vz - \vy\|^2 + \sum_{j=1}^n \sum_{i=1}^k \gamma_{ij} \|\vp_j-\vmu_i\|^2_{\mSigma^{-1}},
\label{eq:gsf minimization 0}
\end{align}
where we dropped terms involving $\vp_j'$ as they are independent to the optimization variable $\vz$. Note that \eref{eq:gsf minimization 0} is a quadratic minimization, which is significantly easier than \eref{eq:epll gmm}.

For the case of spatially regulated NLM, it is possible to further simplify \eref{eq:gsf minimization 0} by recognizing that $\vp_j$ involves both spatial coordinate $\vx_j$ and patch $\vz_j$. Therefore, by expressing $\vp_j = [\vx_j^T, \vz_j^T]^T$ and by defining $\vmu_i$ using \eref{eq:mSigma}, \eref{eq:gsf minimization 0} becomes
\begin{align*}
\vzhat
&= \argmin{\vz} \; \lambda \|\vz - \vy\|^2 + \sum_{j=1}^n \sum_{i=1}^k \gamma_{ij} \left\| \begin{bmatrix} \vx_j \\ \vz_j \end{bmatrix} - \begin{bmatrix} \vmu_i^{(s)} \\ \vmu_i^{(r)} \end{bmatrix}\right\|^2_{\mSigma^{-1}}.
\end{align*}
In this minimization, we observe that since $\vx_j$ is not an optimization variable, it can be eliminated without changing the objective function. By using a patch extract operator $\mP_j$, i.e., $\vz_j = \mP_j\vz$, we obtain the minimization
\begin{align}
\vzhat
&= \argmin{\vz} \; \lambda \|\vz - \vy\|^2 + \sum_{j=1}^n \sum_{i=1}^k \gamma_{ij} \left\| \mP_j \vz -  \vmu_i^{(r)} \right\|^2,
\tag{$P_1$}
\label{eq:gsf minimization}
\end{align}
where we have absorbed the NLM parameter $2h_r^2$ (the diagonal term of $\mSigma$) into $\lambda$. Problem \eref{eq:gsf minimization} is the main optimization of interest in this paper. We call \eref{eq:gsf minimization} the \emph{GMM Symmetric Smoothing Filter (GSF)}. In the literature, there are other GMM based denoising algorithms. Their connections to GSF will be discussed in Section IV .

\subsection{Original Smoothing Filter}
We now discuss the role of symmetrization by studying \eref{eq:gsf minimization} and the EM algorithm for learning the GMM. To keep track of the iterations of the EM algorithm, we use the running index $t$ denotes the iteration number. For consistency, we will focus on the spatially regulated NLM.

In spatially regulated NLM, the $i$th pixel of the denoised image is
\begin{equation}
\widehat{z}_i = \frac{\sum_{j=1}^n W_{ij}y_j}{\sum_{j=1}^n W_{ij}},
\label{eq:nlm solution}
\end{equation}
with $W_{ij}$ defined in \eref{eq:nlm s}. To obtain \eref{eq:nlm solution} from \eref{eq:gsf minimization}, we consider the following choices of parameters
\begin{align}
\mP_j\vz    = z_j, \;\; \vmu_i^{(r)} &= y_i, \;\; \pi_i = 1/k, \;\; \lambda = 0, \;\; k = n, \notag \\
\gamma_{ij} &= \calN(\vp_j \,|\, \vp_i, \mSigma_{\mathrm{NLM}}).
\label{eq:conditions original}
\end{align}
In this case, since the quadratic objective in \eref{eq:gsf minimization} is separable, the $i$th term becomes
\begin{equation}
\zhat_i = \argmin{z_i} \;\; \sum_{j=1}^n  \gamma_{ij} (z_i - y_j)^2 = \frac{\sum_{j=1}^n \gamma_{ij}y_j}{\sum_{j=1}^n \gamma_{ij}},
\end{equation}
which coincides with \eref{eq:nlm solution} as $W_{ij} = \calN(\vp_j \,|\, \vp_i, \mSigma_{\mathrm{NLM}})$ because of \eref{eq:nlm s}.

It is important to study the conditions in \eref{eq:conditions original}. First, the patch extractor $\mP_j$ extracts a pixel $z_j$ from $\vz$. This step is necessary because NLM is a weighted average of individual pixels, even though the weights are calculated using patches. Accordingly, the mean vector $\vmu_i$ is also a pixel $y_i$ so that it matches with the optimization variable $z_i$. From a clustering perspective, we can interpret $\vmu_i = y_i$ as having one cluster center for one pixel, i.e., every pixel has its own cluster. Clearly, this is a suboptimal configuration, and we will address it when we present the proposed method. We also note that in \eref{eq:conditions original}, the parameter $\lambda$ is $0$. What it means is that the data fidelity term is not used and only the EPLL prior is required to obtain the NLM result.

The most interesting observation in \eref{eq:conditions original} is the choice of $\gamma_{ij}$. In fact, the $\gamma_{ij}$ in \eref{eq:conditions original} is only the numerator of \eref{eq:gamma} by assuming $\pi_i = 1/k$. If we let $\mTheta_i = (\pi_i,\vmu_i)$ be the $i$th model parameter of a GMM, then the physical meaning of \eref{eq:conditions original} is a \emph{conditional probability} of observing $\vp_j$ given that we pick the $i$th cluster, i.e., $\Pb(\vp_j \,|\, \mTheta_i)$. In contrast, \eref{eq:gamma} is a \emph{posterior probability} of picking the $i$th cluster given that we observe $\vp_j$, i.e., $\Pb(\mTheta_i \,|\, \vp_j)$. We will discuss this subtle difference more carefully in the next subsection when we discuss the one-step Sinkhorn-Knopp.

\subsection{One-step Sinkhorn-Knopp}
In one-step Sinkhorn-Knopp, we recognize that the $i$th pixel of the denoised image is
\begin{equation}
 \widehat{z}_i = \frac{\sum_{j=1}^n \widetilde{W}_{ij}y_j}{\sum_{j=1}^n \widetilde{W}_{ij}} \quad\mbox{and}\quad \widetilde{W}_{ij} =  \frac{W_{ij}}{\sum_{l=1}^n W_{lj}}.
\end{equation}
This result can be obtained from \eref{eq:gsf minimization} by letting
\begin{align}
\mP_j\vz    = z_j, \;\; \vmu_i^{(r)} &= y_i, \;\; \pi_i = 1/k, \;\; \lambda = 0, \;\; k = n, \notag \\
\gamma_{ij} &= \frac{ \calN(\vp_j \,|\, \vp_i, \mSigma_{\mathrm{NLM}}) }{ \sum_{l=1}^k \calN(\vp_j \,|\, \vp_l, \mSigma_{\mathrm{NLM}}) }.
\label{eq:conditions one step}
\end{align}
Unlike the conditions posted by the original smoothing filter in \eref{eq:conditions original}, one-step Sinkhorn-Knopp defines $\gamma_{ij}$ according to \eref{eq:gamma}, or with some substitutions that yields \eref{eq:conditions one step}.

As mentioned in the previous subsection, the $\gamma_{ij}$ defined in \eref{eq:conditions original} is a conditional probability whereas that in \eref{eq:conditions one step} is a posterior probability. The posterior probability, if we refer to the EM algorithm (see Algorithm~\ref{alg:em for gmm}), is in fact the E-step with initializations $\pi_i^{(0)} = 1/k$ and $\vmu_i^{(0)} = \vp_i$. For the original filter, the E-step is not executed because $\gamma_{ij}$ is defined through \eref{eq:conditions original}. Therefore, from a clustering perspective, it is reasonable to expect a better denoising result from one-step Sinkhorn-Knopp than the original smoothing filter because the clustering is better performed.

To further investigate the difference between the conditional probability $\Pb(\vp_j \,|\, \mTheta_i)$ and the posterior probability $\Pb(\mTheta_i \,|\, \vp_j)$, we adopt a graph perspective by treating each patch $\vp_j$ as a node, and $\gamma_{ij}$ as the weight on the edge linking node $i$ and node $j$  \cite{Milanfar_2013a}. In the original smoothing filter, the conditional probability $\Pb(\vp_j \,|\, \mTheta_i)$ causes a ``majority vote'' effect, meaning that the parameter $\mTheta_i$ has a direct influence to every patch $\{\vp_j\}$ in the image. Therefore, if $\mTheta_i$ has many ``weak friends'', the sum of these ``weak friends'' can possibly alter the denoising result which would have been better obtained from a few ``good friends''.

In contrast, the one-step Sinkhorn-Knopp uses the posterior probability $\Pb(\mTheta_i \,|\, \vp_j)$. From Bayes rule, the posterior probability is related to the conditional probability by
$$
\Pb(\mTheta_i \,|\, \vp_j) = \frac{\Pb(\vp_j \,|\, \mTheta_i)\Pb(\mTheta_i)}{\Pb(\vp_j)}.
$$
Since $\Pb(\mTheta_i) = \pi_i$ and $\pi_i = 1/k$, we see that $\Pb(\mTheta_i \,|\, \vp_j)$ is the ratio of $\Pb(\vp_j \,|\, \mTheta_i)$ and $\Pb(\vp_j)$. $\Pb(\vp_j)$ measures the popularity of $\vp_j$. Thus, if $\vp_j$ is a popular patch (i.e., it is a ``friend'' of many), then the normalization $\Pb(\vp_j \,|\, \mTheta_i)/\Pb(\vp_j)$ is a way to balance out the influence of $\vp_j$. Interpreted in another way, it is equivalent to say that if $\mTheta_i$ has many ``weak friends'', the influence of these ``weak friends'' should be reduced. Such intuition is coherent to many NLM methods that attempt to limit the number of nearby patches, e.g., \cite{Kervrann_Boulanger_2008,Luo_Chan_Pan_2013}.

The definite answer to whether there is performance gain due to one-step Sinkhorn-Knopp is determined by the likelihood of obtaining ``weak friends''. This, in turn, is determined by the image content and the NLM parameter $h_r$ which controls the easiness of claiming ``friends''. For example, if an image contains many textures of a variety of content and if $h_r$ is large, then it is quite likely to obtain ``weak friends''. In this case, one-step Sinkhorn-Knopp will improve the denoising performance. On the other hand, if an image contains only a constant foreground and a constant background, then most patches are ``good friends'' already. Applying the one-step Sinkhorn-Knopp could possibly hurt the denoising performance. \fref{fig:PSNR gain negative} shows an example.

\begin{figure}[h]
\centering
\begin{tabular}{c}
\includegraphics[width=\linewidth]{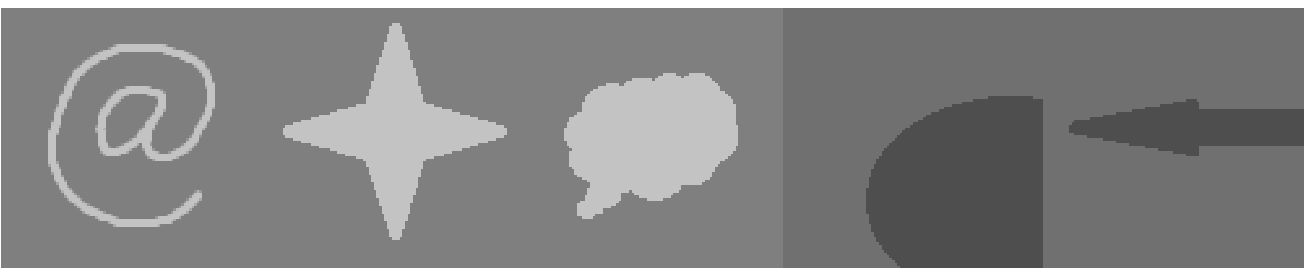}\\
\small{
\begin{tabular}{cccccc}
Image No. & 11 & 12 & 13 & 14 & 15\\
\hline
\hline
Original  & 33.48 & 35.15 & 36.39 & 38.85 & 36.81 \\
One-Step  & 32.98 & 34.67 & 35.59 & 38.36 & 36.43 \\
\hline
PSNR Gain & -0.50 & -0.47 & -0.80 & -0.48 & -0.38 \\
\hline
\end{tabular}
}
\end{tabular}
\caption{Repeat of the experiment in \fref{fig:psnr_gain} using images with a constant foreground and a constant background. Optimal $h_r$ is tuned for the original standard NLM and the one-step Sinkhorn-Knopp, respectively. Note that the PSNR gain are all negative.}
\label{fig:PSNR gain negative}
\end{figure}

\begin{figure}[h]
\centering
\includegraphics[width=0.9\linewidth]{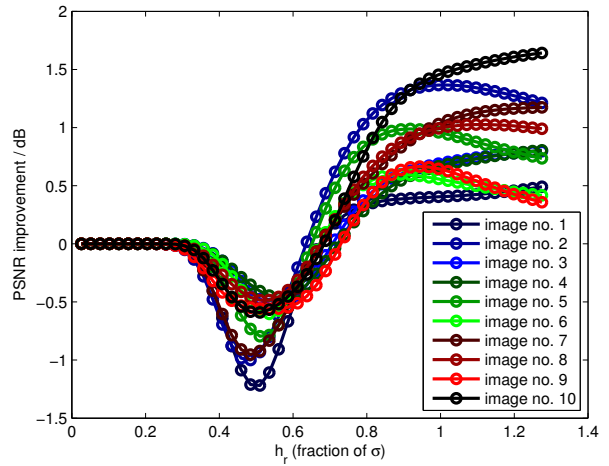}
\caption{Repeat the experiment in \fref{fig:psnr_gain} by plotting PSNR gain as a function of $h_r$. Note the consistent change of PSNR gain from negative to positive as $h_r$ increases.}
\label{fig:PSNR gain hr}
\end{figure}

To further justify the above claim that the performance gain is caused by the likelihood of obtaining ``weak friends'', we consider the 10 images in \fref{fig:psnr_gain} by plotting the PSNR gain as a function of $h_r$. Our hypothesis is that for small $h_r$, the performance gain should be small or even negative because it is difficult for a patch to find its ``friends''. When $h_r$ is large, the performance gain should become significant because many ``weak friends'' will be balanced out by the one-step Sinkhorn-Knopp. As shown in \fref{fig:PSNR gain hr}, this is in fact the case: For $h_r$ lies between 0 and certain threshold (around $0.65\sigma$ where $\sigma$ is the noise standard deviation), PSNR gain is always zero or negative. When $h_r$ increases, PSNR gain becomes positive. The result is consistent for all 10 images we tested.

\subsection{Full Sinkhorn-Knopp}
The full Sinkhorn-Knopp (Algorithm~\ref{alg:sinkhorn}) is an iterative algorithm that repeatedly applies the one-step Sinkhorn-Knopp until convergence. To analyze the full Sinkhorn-Knopp algorithm, we first recognize that the algorithm can be effectively described by two steps:
\begin{equation}
W_{ij} \leftarrow  \frac{\widetilde{W}_{ij}}{\sum_{l=1}^n \widetilde{W}_{il}} \quad\mbox{and}\quad \widetilde{W}_{ij} \leftarrow  \frac{W_{ij}}{\sum_{l=1}^n W_{lj}},
\label{eq:column row}
\end{equation}
where the first equation is a row normalization and the second equation is a column normalization. This pair of normalization can be linked to the EM algorithm in the following sense.

First, in the EM algorithm we fix the mixture weight $\pi_i^{(t)}$ as $\pi_i^{(t)} = 1/k$ for all clusters $i = 1,\ldots,k$ and all iteration numbers $t = 1,\ldots,t_{\mathrm{max}}$. This step is necessary because the full Sinkhorn-Knopp does not involve mixture weights. Intuitively, setting $\pi_i^{(t)} = 1/k$ ensures that all clusters have equal probability to be selected.

\begin{table*}[t]
\centering
\caption{Generalization and comparisons using EM algorithm for learning GMM with a known $\mSigma$.}
\scalebox{0.95}{
\begin{tabular}{|c|c|cccc|}
\hline
                    &                               & Original & One-step         & Full & GSF \\
                    &                               & Filter \cite{Buades_Coll_2005_Journal}   & Sinkhorn-Knopp \cite{Chan_Zickler_Lu_2013} & Sinkhorn-Knopp \cite{Sinkhorn_Knopp_1967}      & (Proposed)\\
\hline\hline
No. Clusters        &                               & $k = n$         & $k = n$                    & $k = n$                    & $k < n$\\
                    &                               &                 &                            &                            & (cross-validation)\\
\hline
Initialization      & $\gamma_{ij}^{(0)}$           & $\calN(\vp_j|\vmu_i^{(0)},\mSigmaNLM)$
                                                    & N/A
                                                    & N/A
                                                    & N/A\\
                    & $\pi_{i}^{(0)}$               & $1/k$           & $1/k$                      & $1/k$                      & $1/k$ \\
                    & $\vmu_i^{(0)}$                & $\vp_i$         & $\vp_i$                    & $\vp_i$                    & randomly picked $\vp_i$ \\
\hline
E-step              & Update $\gamma_{ij}^{(t)}$    & $\times$        & $\checkmark$           & $\checkmark$                & $\checkmark$\\
M-step              & Update $\pi_i^{(t)}$          & $\times$        & $\times$                  & $\times$                    & $\checkmark$\\
                    & Update $\vmu_i^{(t)}$         & $\times$        & $\times$                  & $\checkmark$ (implicitly)   & $\checkmark$\\
\hline
No. Iterations      &                               & 0               & 1                         & Many                        & Many \\
\hline
\hline
Denoising           & $\lambda$                     & 0               & 0                         & 0                           & by SURE \\
Parameters          & $\mP_j\vz$                    & $z_j$           & $z_j$                     & $z_j$                       & $\vp_j$\\
\hline
\end{tabular}
}
\label{table:gmm cases}
\end{table*}

When $\pi_i^{(t)}$ is fixed, the EM algorithm in Algorithm~\ref{alg:em for gmm} has only two steps: Update of $\gamma_{ij}^{(t)}$ in \eref{eq:em,gamma} and update of $\vmu_i^{(t)}$ in \eref{eq:em,mu}. Inspecting this pair of equations, we observe that \eref{eq:em,gamma} appears a column normalization whereas \eref{eq:em,mu} appears a row normalization. However, since full Sinkhorn-Knopp does not have a mean vector $\vmu_i^{(t)}$, we have to modify the EM algorithm in order to link the two. To this end, we modify \eref{eq:em,mu} by defining a sequence $\beta_{ij}^{(t)}$ such that
\begin{equation}
 \textbf{M-step}: \quad\quad\quad \beta_{ij}^{(t)}  = \frac{\gamma_{ij}^{(t)}}{\sum_{l=1}^n \gamma_{il}^{(t)}},
\label{eq:derive,sinkhorn,eq1a}
\end{equation}
and modify \eref{eq:em,gamma} by updating of $\gamma_{ij}^{(t)}$ via
\begin{equation}
\textbf{E-step}: \quad\quad\quad \gamma_{ij}^{(t)}  = \frac{ \beta_{ij}^{(t)}}{\sum_{l=1}^k  \beta_{lj}^{(t)}}.
\label{eq:derive,sinkhorn,eq1b}
\end{equation}
Under this setting, \eref{eq:derive,sinkhorn,eq1a}-\eref{eq:derive,sinkhorn,eq1b} becomes exactly \eref{eq:column row}.

It is important to understand the difference between the original EM algorithm using \eref{eq:em,gamma}-\eref{eq:em,mu} and the modified EM algorithm using \eref{eq:derive,sinkhorn,eq1a}-\eref{eq:derive,sinkhorn,eq1b}. In the M-step of the modified EM algorithm, the mean vector $\vmu_i^{(t)}$ is absent. However, $\vmu_i^{(t)}$ is still updated, though implicitly, because
\begin{align}
\vmu_i^{(t)} = \frac{\sum_{j=1}^n \gamma_{ij}^{(t)} \vp_j }{\sum_{l=1}^n \gamma_{il}^{(t)}} \overset{(a)}{=} \sum_{j=1}^n \beta_{ij}^{(t)}\vp_j,
\label{eq:derive,sinkhorn,eq4}
\end{align}
where $(a)$ follows from \eref{eq:derive,sinkhorn,eq1a}. Therefore, $\beta_{ij}^{(t)}$ are the coefficients that form $\vmu_i^{(t)}$ through a linear combination of $\vp_j$'s.

In the E-step of the modified EM algorithm, since $\vmu_i^{(t)}$ is absent, one cannot compute the conditional probability $\calN(\vp_j \,|\, \vmu_i^{(t)}, \mSigma)$ and hence the posterior probability in \eref{eq:em,gamma}. To resolve this issue, we recognize that since $\sum_{j=1}^n \beta_{ij}^{(t)} = 1$ and $\beta_{ij}^{(t)} \ge 0$ by definition, one possibility is to replace $\calN(\vp_j \,|\, \vmu_i^{(t)}, \mSigma)$ by $\beta_{ij}^{(t)}$. Such approximation is physically interpretable because $\beta_{ij}^{(t)}$ is the coefficient for the mean vector as shown in \eref{eq:derive,sinkhorn,eq4}. Thus, $\beta_{ij}^{(t)}$ having larger values will have a larger contribution to forming $\vmu_i^{(t)}$. In this perspective, $\beta_{ij}^{(t)}$ is performing a similar role as $\calN(\vp_j \,|\, \vmu_i^{(t)}, \mSigma)$. Practically, we observe that $\beta_{ij}^{(t)}$ is a good approximation when there are only a few distinctive clusters. This can be understood as that while individual $\beta_{ij}^{(t)}$ may not be accurate, averaging within a few large clusters can reduce the discrepancy between $\beta_{ij}^{(t)}$ and $\calN(\vp_j \,|\, \vmu_i^{(t)}, \mSigma)$.

The fact that the full Sinkhorn-Knopp does not resemble a complete EM algorithm offers some insights into the performance gain phenomenon. Recall from the results in \fref{fig:sinkhorn_experiment}, we observe that the first Sinkhorn-Knopp iteration always increases the PSNR except the artificial images we discussed in Section~\ref{sec:generalization}.C. This can be interpreted as that the clustering is properly performed by the EM algorithm. However, as more Sinkhorn-Knopp iterations are performed, some images show reduction in PSNR, e.g., images 3, 4, 8, 10. A close look at these images suggests that they contain complex texture regions that are difficult to form few but distinctive clusters. In this case, the approximation of $\calN(\vp_j \,|\, \vmu_i^{(t)}, \mSigma)$ by $\beta_{ij}^{(t)}$ is weak and hence the denoising performance drops.

\subsection{Summary}
To summarize our findings, we observe that the performance of the normalization is related to how the EM algorithm is being implemented. A summary of these findings is shown in Table~\ref{table:gmm cases}. For all the three algorithms we considered: the original filter, the one-step Sinkhorn-Knopp, and the full Sinkhorn-Knopp algorithm, the EM algorithm is not completely performed or in-properly configured. For example, setting $k = n$ causes excessive number of clusters and should be modified to $k < n$; the MAP parameter $\lambda$ is always 0 and should be changed to a positive value to utilize the data fidelity term in \eref{eq:gsf minimization}; the E-step and the M-step are not performed as it should be. Therefore, in the following section we will propose a new algorithm that completely utilizes the EM steps for problem \eref{eq:gsf minimization}. 
\section{GMM Symmetric Smoothing Filter}
\label{sec:gm_nlm}
The proposed denoising algorithm is called the Gaussian Mixture Model Symmetric Smoothing Filter (GSF). The overall algorithm of GSF consists of two steps:
\begin{itemize}
\item Step 1: Estimate the GMM parameter $\vmu_i^{(r)}$ and $\gamma_{ij}$ from the noisy image the by EM algorithm.
\item Step 2: Solve Problem \eref{eq:gsf minimization}, which has a closed form solution.
\end{itemize}
In the followings we discuss how GSF is implemented.

\subsection{Closed Form Solution of GSF}
First of all, we recall that since \eref{eq:gsf minimization} is a quadratic minimization, it is possible to derive a closed form solution by considering the first order optimality condition, which yields a normal equation
\begin{equation}
\left(\sum_{j=1}^n \mP_j^T\mP_j + \lambda \mI \right)\vzhat = \sum_{j=1}^n \mP_j^T\vw_j + \lambda \vy,
\label{eq:normal eq}
\end{equation}
where the vector $\vw_j$ is defined as
\begin{equation}
\vw_j \bydef \sum_{i=1}^n \gamma_{ij}\vmu_i^{(r)}.
\label{eq:vwj}
\end{equation}
Equations \eref{eq:normal eq}-\eref{eq:vwj} has a simple interpretation: The intermediate vector $\vw_j$ is a weighted average of the mean vectors $\{\vmu_i^{(r)}\}_{i=1}^k$. These $\{\vw_j\}_{j=1}^n$ represent a collection of (denoised) overlapping patches. The operation $\mP_j^T$ on the right hand side of \eref{eq:normal eq} aggregates these overlapping patches, similar to the aggregation step in BM3D \cite{Dabov_Foi_Katkovnik_2007}. The addition of $\lambda \vy$ regulates the final estimate by adding a small amount of fine features, depending on the magnitude of $\lambda$.

In order to use \eref{eq:normal eq}-\eref{eq:vwj}, we must resolve two technical issues related to the EM algorithm and Problem \eref{eq:gsf minimization}: (i) How to determine $\lambda$; (ii) How to determine $k$.

\subsection{Parameter $\lambda$}
Ideally, $\lambda$ should be chosen as the one that minimizes the mean squared error (MSE) of the denoised image. However, in the absence of the ground truth, MSE cannot be calculated directly. To alleviate this difficulty, we consider the Stein's Unbiased Risk Estimator (SURE) \cite{Stein_1981,Ramani_Blu_Unser_2008}. SURE is a consistent and unbiased estimator of the MSE. That is, SURE converges to the true MSE as the number of observations grows. Therefore, when there are sufficient number of observed pixels (which is typically true for images), minimizing the SURE is equivalent to minimizing the true MSE.

In order to derive SURE for our problem, we make an assumption about the boundary effect of $\mP_j$.
\begin{assumption}
\label{assumption: P}
We assume that the patch-extract operator $\{\mP_j\}_{j=1}^n$ satisfies the following approximation:
\begin{equation}
\sum_{j=1}^n \mP_j^T\mP_j = d \mI.
\label{eq:sum P}
\end{equation}
\end{assumption}
\noindent We note that Assumption \ref{assumption: P} only affects the boundary pixels and not the interior pixels. Intuitively, what Assumption \ref{assumption: P} does is to require that the boundary pixels of the image are periodically padded instead of zero-padded. In the image restoration literature, periodic boundary padding is common when analyzing deblurring methods, e.g., \cite{Michaeli_Irani_2014}.

\begin{figure}[t]
\centering
\includegraphics[width=0.95\linewidth]{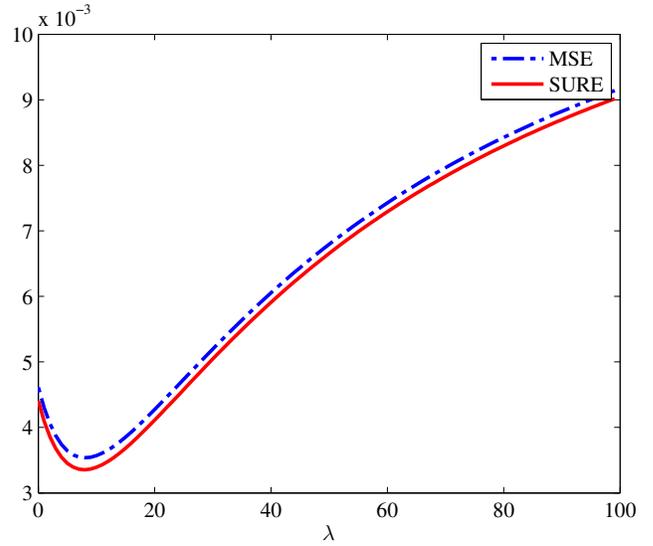}
\caption{Comparison between SURE and the ground truth MSE of a denoising problem.}
\label{fig:sure}
\end{figure}

Under Assumption \ref{assumption: P}, we can substitute \eref{eq:sum P} into \eref{eq:normal eq} and take the matrix inverse. This would yield
\begin{equation}
\vzhat(\lambda) = \frac{d}{d+\lambda}\vu + \frac{\lambda}{d+\lambda}\vy,
\label{eq:zhat lambda}
\end{equation}
where
\begin{equation}
\vu \bydef \frac{1}{d}\sum_{j=1}^n \mP_j^T \vw_j.
\label{eq:vu}
\end{equation}
Then, we can derive the SURE of $\vzhat$ as follows.

\begin{proposition}
Under Assumption \ref{assumption: P}, the SURE of $\vzhat(\lambda)$ is
\begin{equation}
\mathrm{SURE}(\lambda) = -\sigma^2 + \sigmahat^2\left( \frac{d}{d+\lambda}\right)^2 + \frac{2\sigma^2}{n}\left(\frac{\mathrm{div}(\vu) d + n\lambda}{d + \lambda}\right),
\label{eq:sure main}
\end{equation}
where $\sigmahat^2 \bydef \frac{1}{n}\|\vu - \vy\|^2$, and
\begin{align}
\mathrm{div}(\vu) &\bydef \vone_{n \times 1}^T  \left( \frac{1}{d}\sum_{j=1}^n\mP_j^T \left(\sum_{i=1}^k\gamma_{ij} \left(\frac{\sum_{j=1}^n \gamma_{ij} \ve_j}{\sum_{j=1}^n \gamma_{ij}}\right)\right)\right),
\label{eq:div vu}
\end{align}
where $\ve_j \in \R^d$ is the $j$th standard basis.
\end{proposition}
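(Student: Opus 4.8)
The plan is to apply Stein's unbiased risk formula and then exploit the fact that $\vzhat(\lambda)$ in \eref{eq:zhat lambda} is an affine combination of $\vu$ and $\vy$. Recall that for $\vy = \vz + \vvarepsilon$ with $\vvarepsilon \sim \calN(\vzero,\sigma^2\mI)$ and a weakly differentiable estimator $\vzhat(\vy)$, the quantity $-\sigma^2 + \frac{1}{n}\|\vzhat-\vy\|^2 + \frac{2\sigma^2}{n}\,\mathrm{div}(\vzhat)$, where $\mathrm{div}(\vzhat) = \sum_{j=1}^n \partial\widehat z_j/\partial y_j$, is an unbiased estimate of $\frac1n\E\|\vzhat-\vz\|^2$. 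So it suffices to evaluate the data-fidelity term and the divergence term for the specific estimator $\vzhat(\lambda)$.

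For the fidelity term, since the coefficients $\frac{d}{d+\lambda}$ and $\frac{\lambda}{d+\lambda}$ in \eref{eq:zhat lambda} sum to one, we have $\vzhat(\lambda)-\vy = \frac{d}{d+\lambda}(\vu-\vy)$, hence $\frac1n\|\vzhat(\lambda)-\vy\|^2 = \big(\frac{d}{d+\lambda}\big)^2\frac1n\|\vu-\vy\|^2 = \sigmahat^2\big(\frac{d}{d+\lambda}\big)^2$, which is the second term of \eref{eq:sure main}. For the divergence term, linearity of the divergence together with $\mathrm{div}(\vy)=n$ gives $\mathrm{div}(\vzhat(\lambda)) = \frac{d}{d+\lambda}\mathrm{div}(\vu) + \frac{n\lambda}{d+\lambda} = \frac{d\,\mathrm{div}(\vu)+n\lambda}{d+\lambda}$; multiplying by $2\sigma^2/n$ and adding $-\sigma^2$ yields \eref{eq:sure main}. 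Thus the whole statement reduces to computing $\mathrm{div}(\vu)$ and showing it equals \eref{eq:div vu}.

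This last computation is the only substantive step. Writing $\vu = \frac1d\sum_{j=1}^n\mP_j^T\vw_j$ with $\vw_j = \sum_{i=1}^k\gamma_{ij}\vmu_i^{(r)}$ and $\vmu_i^{(r)}$ the range part of the M-step mean, i.e. the $\gamma$-weighted average of the noisy patches $\vy_l = \mP_l\vy$, I would: (i) adopt the standard plug-in approximation used in SURE analyses of NLM-type kernel smoothers, treating the responsibilities $\gamma_{ij}$ as locally independent of $\vy$, so that only the explicit $\mP_l\vy$ inside $\vmu_i^{(r)}$ is differentiated; (ii) compute $\partial\vmu_i^{(r)}/\partial\vy = (\sum_l\gamma_{il})^{-1}\sum_l\gamma_{il}\mP_l$ and propagate it through $\vw_j$ and the aggregation operator $\mP_j^T$; and (iii) take the trace of the resulting Jacobian (the role of the $\vone_{n\times1}^T$ in \eref{eq:div vu}) and invoke Assumption~\ref{assumption: P}, $\sum_j\mP_j^T\mP_j = d\mI$, to collapse the patch-overlap terms into the stated double sum over clusters and pixels, with $\ve_j$ the $j$th standard basis vector.

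\textbf{Main obstacle.} The crux is the operator bookkeeping in step (iii): correctly composing the scatter operator $\mP_j^T$ with the gather operators $\mP_l$ hidden inside $\vmu_i^{(r)}$, keeping track of which pixel feeds which coordinate, and verifying that the periodic-padding assumption is exactly what makes the cross-patch overlaps reorganize into \eref{eq:div vu}. A secondary point worth stating explicitly is the weight-freezing approximation of step (i): it is routine in the SURE literature but not an identity, and it is the only place where the derivation is not exact.
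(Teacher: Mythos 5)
Your proposal follows essentially the same route as the paper's proof: Stein's formula, the affine split of $\vzhat(\lambda)$ giving $\frac{1}{n}\|\vzhat-\vy\|^2 = \sigmahat^2\bigl(\tfrac{d}{d+\lambda}\bigr)^2$, linearity of the divergence with $\mathrm{div}(\vy)=n$, and a computation of $\mathrm{div}(\vu)$ with the responsibilities $\gamma_{ij}$ frozen (the paper makes the same approximation and flags it in a remark, just as you do). One small correction to your step (iii): Assumption~\ref{assumption: P} is only needed to obtain the closed form \eref{eq:zhat lambda} upstream of the proposition; the divergence formula \eref{eq:div vu} keeps the $\frac{1}{d}\sum_j \mP_j^T$ structure intact and follows directly from $\partial \vy_j/\partial y_j = \ve_j$, with no further collapsing of patch-overlap terms required.
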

\begin{proof}
See Appendix B.
\end{proof}

The SURE given in \eref{eq:sure main} is a one-dimensional function in $\lambda$. The minimizer can be determined in closed-form.
\begin{corollary}
The optimal $\lambda$ that minimizes $\mathrm{SURE}(\lambda)$ is
\begin{equation}
\lambda^* = \max\left( d\left( \left(\frac{\sigmahat^2}{\sigma^2}\right)\left(\frac{n}{n-\mathrm{div}(\vu)}\right)-1 \right), \; 0 \right).
\end{equation}
\end{corollary}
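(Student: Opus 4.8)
The plan is to treat $\mathrm{SURE}(\lambda)$ from \eref{eq:sure main} as a smooth scalar function on the feasible interval $\lambda \ge 0$ and find its minimizer by one-variable calculus, the only care being the nonnegativity constraint. The cleanest bookkeeping comes from the substitution $t \bydef \frac{d}{d+\lambda}$, which is a strictly decreasing bijection of $[0,\infty)$ onto $(0,1]$ with inverse $\lambda = d/t - d$. First I would rewrite \eref{eq:sure main} in the variable $t$: the first bracket is literally $t^2$, and a one-line manipulation of $\frac{\mathrm{div}(\vu)d + n\lambda}{d+\lambda}$ (using $d+\lambda = d/t$) shows it equals $n - (n-\mathrm{div}(\vu))\,t$. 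Collecting terms gives
\begin{equation}
\mathrm{SURE} = \sigma^2 + \sigmahat^2\, t^2 - \frac{2\sigma^2\bigl(n-\mathrm{div}(\vu)\bigr)}{n}\,t,
\end{equation}
a quadratic in $t$ whose leading coefficient $\sigmahat^2 = \frac{1}{n}\|\vu-\vy\|^2$ is positive (generically $\vu \neq \vy$).

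Next I would minimize this parabola. Because the coefficient of $t^2$ is positive, the quadratic is strictly convex, so its unique unconstrained minimizer is the vertex
\begin{equation}
t^\star = \frac{\sigma^2\bigl(n-\mathrm{div}(\vu)\bigr)}{n\,\sigmahat^2}.
\end{equation}
Translating back through $\lambda = d/t-d$ produces the candidate
\begin{equation}
\lambda^\star = d\Bigl(\tfrac{1}{t^\star}-1\Bigr) = d\!\left(\Bigl(\tfrac{\sigmahat^2}{\sigma^2}\Bigr)\Bigl(\tfrac{n}{n-\mathrm{div}(\vu)}\Bigr)-1\right),
\end{equation}
which is precisely the quantity inside the $\max$ in the statement.

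Finally I would reinstate the constraint $\lambda \ge 0$, i.e.\ $t \in (0,1]$. Since the parabola is convex with vertex $t^\star$, there are two cases: if $t^\star \le 1$ --- equivalently $\lambda^\star \ge 0$ --- the constrained minimizer is $t^\star$ and hence $\lambda = \lambda^\star$; if $t^\star > 1$ --- equivalently $\lambda^\star < 0$ --- the parabola is strictly decreasing on all of $(0,1]$, so the constrained minimum sits at the endpoint $t=1$, i.e.\ $\lambda = 0$. The two cases collapse to $\lambda^* = \max(\lambda^\star,0)$, which is the claim. I do not expect any deep obstacle here; the ``hard part'' is only the bookkeeping around the feasible set: one must confirm the leading coefficient is positive (so the vertex is a minimum, not a maximum) and note that the stated formula tacitly assumes $\mathrm{div}(\vu) < n$, so that $t^\star > 0$ lies in the closure of $(0,1]$; the regime $\mathrm{div}(\vu)\ge n$ would make $\mathrm{SURE}$ monotone on $(0,1]$ with infimum attained only as $\lambda\to\infty$, a degenerate situation irrelevant in practice since $\mathrm{div}(\vu)$ behaves as an effective number of degrees of freedom and is far smaller than $n$ for images.
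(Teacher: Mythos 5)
Your proof is correct and takes essentially the same route as the paper, whose own argument simply differentiates $\mathrm{SURE}(\lambda)$, sets the derivative to zero, and clamps the solution to be nonnegative. Your substitution $t = d/(d+\lambda)$ just repackages that calculation as minimizing a convex quadratic on $(0,1]$, and in doing so it actually justifies the second-order condition and the $\max(\cdot,0)$ projection (including the tacit assumption $\mathrm{div}(\vu)<n$) a bit more carefully than the paper's terse proof.
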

\begin{proof}
\eref{eq:sure main} is a differentiable function in $\lambda$. Therefore, the minimizer can be determined by considering the first order optimality and set the derivative of $\mathrm{SURE}(\lambda)$ to zero. The projection operator $\max(\cdot,0)$ is placed to ensure that $\lambda^* \ge 0$.
\end{proof}

\begin{example}
To demonstrate the effectiveness of SURE, we show a typical MSE and a typical SURE curve of a denoising problem. In this example, we consider a $128 \times 128$ image ``Baboon'', with noise standard deviation of $\sigma = 30/255$. The non-local means parameters are $h_r = \sigma$ and $h_s = 10$. The number of clusters is $k = 50$, and the patch size is $5 \times 5$. The results are shown in \fref{fig:sure}, where we observe that the SURE curve and the true MSE curve are very similar. In fact, the minimizer of the true MSE is $\lambda = 8.0080$ with a PSNR of $24.5143$dB whereas the minimizer of SURE is $\lambda = 7.9145$ with a PSNR of $24.5141$dB.
\end{example}

\begin{remark}
Careful readers may notice that in \eref{eq:div vu}, we implicitly assume that $\gamma_{ij}$ is independent of $\vy_j$. This implicit assumption is generally not valid if $\gamma_{ij}$ is learned from $\vy$. However, in practice, we find that if we feed the EM algorithm with some initial estimate (e.g., by running the algorithm with $\lambda = 0$), then the dependence of $\gamma_{ij}$ from $\vy_j$ becomes negligible.
\end{remark}

\subsection{Number of Clusters $k$}
\label{sec:gm_nlm_cross}
The number of clusters $k$ is another important parameter. We estimate $k$ based on the concept of cross validation \cite{Wasserman_2005}.

Our proposed cross-validation method is based on comparing the estimated covariance with $\mSigmaNLM$. More specifically, we compute the estimated covariance
\begin{equation}
\mSigmahat_i = \frac{ \sum_{j=1}^n \gamma_{ij} \left(\vp_j - \vmu_i \right) \left(\vp_j - \vmu_i \right)^T}{ \sum_{j=1}^n \gamma_{ij} }, \label{eq:mSigmahat}
\end{equation}
where $\vmu_i = [\vmu_i^{(s)}, \, \vmu_i^{(r)}]^T$ is the mean returned by the EM algorithm, and $\gamma_{ij} = \pi_{ij}^{(\infty)}$ is the converged weight. Then, we compute the ratio of the deviation
\begin{equation}
\delta_i(k) = \frac{1}{d}\trace{\mSigmaNLM^{-1}\mSigmahat_i}.
\label{eq:deltai}
\end{equation}
Ideally, if $\mSigmahat_i = \mSigmaNLM$, then by \eref{eq:deltai} we have $\delta_i(k)= 1$. However, if the $i$th estimated Gaussian component has a radius significantly larger than $h_r$ (or, $h_s$ for the spatial components), then the covariance $\mSigmahat_i$ would deviate from $\mSigmaNLM$ and hence $\delta_i(k) > 1$. Conversely, if the $i$th estimated Gaussian component has a radius significantly smaller than $h_r$, then we will have $\delta_i(k) < 1$. Therefore, the goal of the cross validation is to find a $k$ such that $\delta_i(k)$ is close to 1.

To complete the cross-validation setup, we average $\delta_i(k)$ over all $k$ clusters to obtain an averaged ratio
\begin{equation}
\delta(k) = \frac{1}{k}\sum_{i=1}^k \delta_i(k).
\label{eq:deltak}
\end{equation}
The parenthesis $(k)$ in \eref{eq:deltak} emphasizes that both $\delta(k)$ and $\delta_i(k)$ are functions of $k$. With \eref{eq:deltak}, we seek the root $k$ of the equation $\delta(k) = 1$.

The root finding process for $\delta(k) = 1$ can be performed using the secant method. Secant method is an extension of the bisection method in which the bisection step size (i.e., $1/2$) is now replaced by an adaptive step size determined by the local derivative of the function. Let $k^{(a)}$ and $k^{(b)}$ be two number of clusters, and $\delta^{(a)}$ and $\delta^{(b)}$ be the corresponding cross-validation scores, i.e., $\delta^{(a)} = \delta(k^{(a)})$. If $\delta^{(a)} > 1$ and $\delta^{(b)}<1$, the secant method computes the new $k$ as
\begin{equation}
k^{(c)} = \frac{ k^{(a)}(\delta^{(b)}-1) - k^{(b)}(\delta^{(a)}-1)}{\delta^{(b)} - \delta^{(a)}}.
\label{eq:k m+1}
\end{equation}
If $\delta(k^{(c)}) > 1$, then we replace $k^{(a)}$ by $k^{(c)}$; Otherwise, we replace $k^{(b)}$ by $k^{(c)}$. The process repeats until the $|k^{(a)} - k^{(c)}| < \texttt{tol}$ and $|k^{(b)} - k^{(c)}| < \texttt{tol}$. A pictorial illustration of the secant method is shown in \fref{fig:secant}. A pseudo code is given in Algorithm \ref{alg:cross validation}.

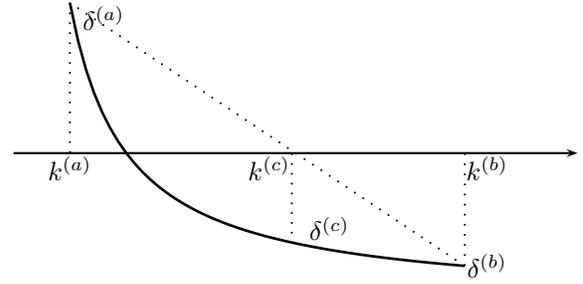
\begin{figure}[t]
\centering
\begin{pspicture}(-0.5,-1.8)(9.5,2)
\psline{->}(0,0)(7.5,0)
\psplot[algebraic,linewidth=1pt]{0.75}{6}{3/x-2}
\psline[linestyle=dotted](0.75,0)(0.75,2)
\psline[linestyle=dotted](6,0)(6,-1.5)
\psline[linestyle=dotted](6,-1.5)(0.75,2)
\psline[linestyle=dotted](3.7,0)(3.7,-1.2)
\rput(0.75,-0.2){$k^{(a)}$}
\rput(6.3,-0.2){$k^{(b)}$}
\rput(1.2,1.8){$\delta^{(a)}$}
\rput(6.3,-1.5){$\delta^{(b)}$}
\rput(3.4,-0.2){$k^{(c)}$}
\rput(4.2,-1){$\delta^{(c)}$}
\end{pspicture}
\caption{Illustration of the secant method. Given $k^{(a)}$ and $k^{(b)}$, we compute $k^{(c)}$ according to the slope defined by the line linking $\delta^{(a)}$ and $\delta^{(b)}$.}
\label{fig:secant}
\end{figure}

\begin{algorithm}[t]
\caption{Cross Validation to Determine $k$}
\begin{algorithmic}
\STATE Input: $k^{(a)}$ and $k^{(b)}$ such that $\delta^{(a)} > 1$ and $\delta^{(b)} < 1$.
\STATE Output: $k^{(c)}$.
\STATE
\WHILE{$|k^{(a)} - k^{(c)}| > \texttt{tol}$ and $|k^{(b)} - k^{(c)}| > \texttt{tol}$}
\STATE Compute $k^{(c)}$ according to \eref{eq:k m+1}.
\STATE Compute $\delta^{(c)} \bydef \delta(k^{(c)})$ according to \eref{eq:deltak}.
\IF{$\delta(k^{(c)}) > 1$}
    \STATE $k^{(a)} \leftarrow k^{(c)}$; $\delta^{(a)} \leftarrow \delta^{(c)}$.
    \ELSE
    \STATE $k^{(b)} \leftarrow k^{(c)}$; $\delta^{(b)} \leftarrow \delta^{(c)}$.
\ENDIF
\ENDWHILE
\end{algorithmic}
\label{alg:cross validation}
\end{algorithm}

\begin{figure}[!]
\centering
\includegraphics[width=\linewidth]{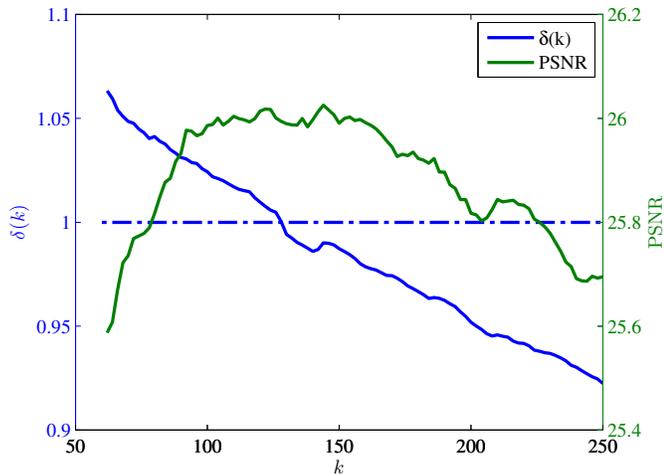}
\caption{Comparison between the cross validation score $\delta(k)$ and the true PSNR value as a function of $k$. The horizontal dashed line indicates the intersection at $\delta(k) = 1$.}
\label{fig:cross validation}
\end{figure}

\begin{example}
To verify the effectiveness of the proposed cross validation scheme, we consider a $128 \times 128$ ``House'' image with noise $\sigma = 60/255$. The patch size is $5 \times 5$, $h_r = \sigma$, and $h_s = 10$. \fref{fig:cross validation} shows the PSNR value of the denoised image and the corresponding cross validation score $\delta(k)$ as a function of $k$. For this experiment, the maximum PSNR is achieved at $k = 144$, where $\mathrm{PSNR}$ = 26.0257dB. Using the cross-validation score $\delta(k)$, we find that $\delta(k)$ is closest to $1$ when $k = 130$. The corresponding PSNR value is 25.9896dB, which is very similar to the true maximum PSNR.
\end{example}

\section{Experiments}
\label{sec:experiments}

In this section, we present additional simulation results to evaluate the proposed GSF.

\subsection{Experiment Settings}
We consider 10 testing images, each of which is resized to $128 \times 128$ (so $n = 16384$) for computational efficiency. The noise standard deviations are set as $\sigma \in \{20/255,\, 40/255,\, 60/255,\, 80/255,\, 100/255\}$. Several existing denoising algorithms are studied, namely the NLM \cite{Buades_Coll_2005_Journal}, One-step Sinkhorn-Knopp \cite{Chan_Zickler_Lu_2013}, BM3D \cite{Dabov_Foi_Katkovnik_2007}, EPLL \cite{Zoran_Weiss_2011}, Global image denoising (GLIDE) \cite{Talebi_Milanfar_2014}, NL-Bayes \cite{Lebrun_Buades_Morel_2013}, and PLE \cite{Yu_Sapiro_Mallat_2012}. The parameters of the methods are configured as shown in Table~\ref{table:configuration}.

For NLM and One-step Sinkhorn-Knopp (One-step, in short), we use the spatially regulated version due to its better performance over the standard NLM. We implement the algorithms by setting the patch size as $5 \times 5$ (i.e., $d=25$). The parameters are $h_s = 10$ and $h_r = \sigma \sqrt{d}$. The full Sinkhorn-Knopp algorithm is implemented using GLIDE \cite{Talebi_Milanfar_2014}, where the source code is downloaded from the author's website \footnote{GLIDE: \url{https://users.soe.ucsc.edu/~htalebi/GLIDE.php}}. Default settings of GLIDE are used in our experiment.

For the proposed GSF, we keep the same settings as NLM except for the intensity parameter $h_r$ where we set $h_r = \sigma$. The omission of the factor $\sqrt{d}$ is due to the fact that each Gaussian component is already a $d$-dimensional multivariate distribution. It is therefore not necessary to normalize the distance $\|\vy_i - \vy_j\|^2$ by the factor $d$.

For BM3D, EPLL, NL-Bayes, we downloaded the original source code from the author's website \footnote{BM3D: \url{http://www.cs.tut.fi/~foi/GCF-BM3D/}}$^,$\footnote{EPLL: \url{http://people.csail.mit.edu/danielzoran/}}$^,$\footnote{NL-Bayes: \url{http://www.ipol.im/pub/art/2013/16/}}. For PLE, we modified an inpainting version of the source code provided by the authors. Default settings of these algorithms are used.

Among these methods, we note that EPLL is an external denoising algorithm where a Gaussian mixture is learned from a collection of 2 million clean patches. All other methods (including GSF) are single image denoising algorithms.

\begin{table}[h]
\caption{Configurations of Methods}
\begin{tabular}{ll}
\hline
Method                                     & Configuration \\
\hline
NLM \cite{Buades_Coll_2005_Journal}        & Patch size $5 \times 5$, $h_s = 10$, $h_r = \sigma \sqrt{d}$\\
One-step \cite{Chan_Zickler_Lu_2013}       & Patch size $5 \times 5$, $h_s = 10$, $h_r = \sigma \sqrt{d}$\\
GSF (Ours)                                 & Patch size $5 \times 5$, $h_s = 10$, $h_r = \sigma$\\
GLIDE \cite{Talebi_Milanfar_2014}          & Default settings. Pilot estimate uses NLM.\\
NL-Bayes \cite{Lebrun_Buades_Morel_2013}   & Base mode. Default settings.\\
PLE \cite{Yu_Sapiro_Mallat_2012}           & Default settings. Default initializations.\\
BM3D \cite{Dabov_Foi_Katkovnik_2007}       & Default settings.\\
EPLL \cite{Zoran_Weiss_2011}               & Default settings. External Database.\\
\hline
\end{tabular}
\label{table:configuration}
\end{table}

\subsection{Comparison with NLM, One-step and Full Sinkhorn-Knopp}
The overall results of the experiment are shown in Table~\ref{table:standard_experiment}. We first compare the PSNR values of GSF with NLM, One-step and full Sinkhorn-Knopp.

In Table~\ref{table:improvement_sigma} we show the average PSNR over the 10 testing images. In this table, we observe that on average One-step has a higher PSNR than NLM by 0.12dB to 1.12dB, with more significant improvements at low noise levels. This implies that the ``grouping'' action by the column normalization becomes less influential when noise increases. Moreover, if we compare GSF with NLM and One-step, we observe that the PSNR gain is even larger. Even at a high noise level (e.g., $\sigma = 80/255$ or $\sigma = 100/255$), the average gain from NLM is 2.5dB or more.

\begin{table}[h]
\centering
\caption{PSNR comparison with different noise level $\sigma$. Results are averaged over 10 testing images. }
\small{
\begin{tabular}{cccccc}
\hline
            & \hspace{-0.5cm} NLM                       &  One-Step             &   Ours                 & $\mathrm{PSNR}_2 $ & $\mathrm{PSNR}_3$\\
$\sigma$    & \hspace{-0.4cm}  ($\mathrm{PSNR}_1$)      &  ($\mathrm{PSNR}_2$)  &  ($\mathrm{PSNR}_3$)   & $-\mathrm{PSNR}_1$ & $-\mathrm{PSNR}_1$\\
\hline
 20 	 & 	 25.59 	 & 	 26.71 	 & 	 28.89 	 & 	 +1.12 	 & 	 +3.30\\
 40 	 & 	 21.97 	 & 	 22.53 	 & 	 25.38 	 & 	 +0.56 	 & 	 +3.41\\
 60 	 & 	 20.33 	 & 	 20.63 	 & 	 23.60 	 & 	 +0.30 	 & 	 +3.27\\
 80 	 & 	 19.46 	 & 	 19.64 	 & 	 22.39 	 & 	 +0.18 	 & 	 +2.92\\
100 	 & 	 18.97 	 & 	 19.09 	 & 	 21.47 	 & 	 +0.12 	 & 	 +2.50\\
\hline
\end{tabular}
}
\label{table:improvement_sigma}
\end{table}

Besides studying the trend of PSNR as a function of $\sigma$, it is also interesting to compare the PSNR when we increase the spatial parameter $h_s$. In Table~\ref{table:improvement}, we show the PSNR improvement when we use different $h_s \in \{5,10,20,50,100\}$ for a $128 \times 128$ image. The results show that when $h_s$ increases, the PSNR improvement also increases. One reason is that in \eref{eq:mSigma}, the spatial parameter $h_s$ controls the diagonal bandwidth of the smoothing filter $\mW$. That is, a small $h_s$ leads to a banded diagonal $\mW$ with small bandwidth. In the limit when $h_s \rightarrow 0$, $\mW$ will become a diagonal matrix, and hence is immune to any column normalization. Therefore, the effectiveness of the column normalization in the One-step depends on how large $h_s$ is.

\begin{table}[h]
\centering
\caption{PSNR comparison with different parameter $h_s$. The testing image is ``Man''. $\sigma = 40/255$. }
\small{
\begin{tabular}{cccccc}
\hline
            & \hspace{-0.5cm} NLM     &  One-Step &   Ours    & $\mathrm{PSNR}_2 $ & $\mathrm{PSNR}_3$\\
$h_s$       & \hspace{-0.4cm}  ($\mathrm{PSNR}_1$)    &  ($\mathrm{PSNR}_2$)   &  ($\mathrm{PSNR}_3$)   & $-\mathrm{PSNR}_1$ & $-\mathrm{PSNR}_2$  \\
\hline
  5 	 & 	 \hspace{-0.4cm}22.82 	 & 	 23.08 	 & 	 24.76 	 & 	 +0.26 	 & 	 +1.68\\
 10 	 & 	 \hspace{-0.4cm}21.83 	 & 	 22.24 	 & 	 24.83 	 & 	 +0.41 	 & 	 +2.60\\
 20 	 & 	 \hspace{-0.4cm}21.25 	 & 	 21.66 	 & 	 24.79 	 & 	 +0.41 	 & 	 +3.13\\
 50 	 & 	 \hspace{-0.4cm}20.92 	 & 	 21.59 	 & 	 24.74 	 & 	 +0.68 	 & 	 +3.15\\
100 	 & 	 \hspace{-0.4cm}20.53 	 & 	 21.38 	 & 	 24.73 	 & 	 +0.85 	 & 	 +3.36 \\
\hline
\end{tabular}
}
\label{table:improvement}
\end{table}

\renewcommand{\arraystretch}{1.2}
\begin{table*}[!]
\centering
\tabcolsep=0.13cm
\caption{Denoising results of Standard NLM \cite{Buades_Coll_2005_Journal}, One-step Sinkhorn-Knopp \cite{Chan_Zickler_Lu_2013}, BM3D \cite{Dabov_Foi_Katkovnik_2007}, EPLL \cite{Zoran_Weiss_2011}, Global image denoising \cite{Talebi_Milanfar_2014}, and the proposed GSF.}
\footnotesize{
\begin{tabular}{|c|cccccccc|cccccccc|}
\hline\hline
         & NLM    &   OneStep  &  GSF    & GLIDE & Bayes &  PLE &  BM3D   &  EPLL
         & NLM    &   OneStep  &  GSF    & GLIDE & Bayes &  PLE &  BM3D   &  EPLL      \\
         & \cite{Buades_Coll_2005_Journal}  &  \cite{Chan_Zickler_Lu_2013}  & (ours) & \cite{Talebi_Milanfar_2014}    &  \cite{Lebrun_Buades_Morel_2013}&  \cite{Yu_Sapiro_Mallat_2012} & \cite{Dabov_Foi_Katkovnik_2007}     & \cite{Zoran_Weiss_2011}
         & \cite{Buades_Coll_2005_Journal}  &  \cite{Chan_Zickler_Lu_2013}  & (ours) & \cite{Talebi_Milanfar_2014}    &  \cite{Lebrun_Buades_Morel_2013}&  \cite{Yu_Sapiro_Mallat_2012} & \cite{Dabov_Foi_Katkovnik_2007}     & \cite{Zoran_Weiss_2011}   \\
\hline\hline
$\sigma$ & \multicolumn{8}{c|}{\emph{Baboon} } & \multicolumn{8}{c|}{\emph{Barbara} } \\
\hline
	20	&	24.53	&	25.01	&	26.84	&	26.51	&	27.22	&	26.14	&	26.96	&	27.19	&	26.05	&	27.02	&	29.43	&	28.64	&	29.52	&	28.82	&	29.42	&	29.40	\\
	40	&	22.32	&	22.55	&	24.49	&	24.04	&	24.55	&	23.75	&	24.57	&	24.56	&	21.48	&	21.91	&	25.41	&	24.83	&	25.60	&	24.74	&	25.35	&	25.79	\\
	60	&	21.31	&	21.46	&	23.32	&	22.87	&	23.03	&	22.67	&	23.53	&	23.44	&	19.31	&	19.55	&	23.28	&	22.33	&	23.61	&	22.54	&	23.55	&	23.64	\\
	80	&	20.76	&	20.87	&	22.51	&	22.22	&	22.21	&	21.72	&	22.77	&	22.66	&	18.25	&	18.38	&	21.83	&	20.92	&	22.11	&	21.05	&	22.30	&	22.11	\\
	100	&	20.43	&	20.52	&	21.98	&	20.68	&	21.80	&	20.49	&	22.14	&	22.09	&	17.68	&	17.76	&	20.77	&	19.82	&	20.84	&	19.83	&	21.30	&	21.00	\\
\hline	
$\sigma$ & \multicolumn{8}{c|}{\emph{Boat} } & \multicolumn{8}{c|}{\emph{Bridge} } \\
\hline
	20	&	24.88	&	26.05	&	28.43	&	27.55	&	28.59	&	27.53	&	28.58	&	28.76	&	23.99	&	24.95	&	26.90	&	26.34	&	27.26	&	26.83	&	27.09	&	27.25	\\
	40	&	21.97	&	22.39	&	24.96	&	24.35	&	25.04	&	24.26	&	25.12	&	25.32	&	20.87	&	21.35	&	23.85	&	23.18	&	24.02	&	23.29	&	23.88	&	24.19	\\
	60	&	20.46	&	20.70	&	23.19	&	22.59	&	23.60	&	22.21	&	23.47	&	23.56	&	19.58	&	19.85	&	22.24	&	21.47	&	22.45	&	21.37	&	22.44	&	22.48	\\
	80	&	19.60	&	19.74	&	22.14	&	21.42	&	22.21	&	21.19	&	22.43	&	22.41	&	18.83	&	19.01	&	21.20	&	20.44	&	21.25	&	20.35	&	21.45	&	21.42	\\
	100	&	19.09	&	19.18	&	21.34	&	20.50	&	21.43	&	19.81	&	21.74	&	21.53	&	18.35	&	18.48	&	20.46	&	19.75	&	20.16	&	19.79	&	20.67	&	20.66	\\
\hline																	
$\sigma$ & \multicolumn{8}{c|}{\emph{Couple} } & \multicolumn{8}{c|}{\emph{Hill} } \\
\hline
	20	&	24.54	&	25.62	&	28.20	&	27.25	&	28.33	&	27.43	&	28.42	&	28.60	&	25.51	&	26.38	&	28.68	&	27.98	&	28.99	&	28.12	&	28.82	&	28.97	\\
	40	&	21.67	&	22.10	&	24.64	&	23.95	&	24.96	&	23.86	&	25.00	&	25.11	&	22.58	&	23.11	&	25.55	&	24.79	&	25.61	&	24.92	&	25.70	&	25.85	\\
	60	&	20.35	&	20.60	&	23.07	&	22.32	&	23.11	&	22.40	&	23.36	&	23.37	&	21.33	&	21.69	&	23.95	&	23.26	&	23.91	&	23.25	&	24.21	&	24.16	\\
	80	&	19.64	&	19.81	&	22.02	&	21.40	&	21.76	&	20.76	&	22.32	&	22.30	&	20.68	&	20.93	&	22.90	&	22.42	&	22.60	&	21.78	&	23.19	&	23.12	\\
	100	&	19.24	&	19.35	&	21.23	&	19.80	&	20.97	&	18.15	&	21.56	&	21.52	&	20.29	&	20.49	&	22.07	&	21.85	&	21.75	&	20.83	&	22.37	&	22.39	\\
\hline									
$\sigma$ & \multicolumn{8}{c|}{\emph{House} } & \multicolumn{8}{c|}{\emph{Lena} } \\
\hline
	20	&	28.20	&	30.02	&	32.92	&	31.82	&	32.54	&	31.57	&	32.73	&	32.47	&	26.90	&	28.03	&	29.83	&	29.19	&	30.13	&	28.91	&	29.93	&	30.06	\\
	40	&	23.26	&	24.27	&	28.31	&	27.31	&	28.49	&	26.75	&	28.91	&	28.77	&	22.40	&	23.11	&	26.40	&	25.96	&	26.40	&	25.46	&	26.23	&	26.70	\\
	60	&	21.40	&	21.79	&	26.05	&	24.72	&	26.10	&	24.11	&	26.68	&	26.58	&	20.22	&	20.60	&	24.49	&	23.51	&	24.57	&	22.73	&	24.49	&	24.80	\\
	80	&	20.52	&	20.70	&	24.46	&	22.96	&	23.88	&	22.41	&	25.20	&	25.04	&	19.09	&	19.32	&	23.10	&	22.00	&	22.61	&	21.54	&	23.22	&	23.45	\\
	100	&	20.04	&	20.13	&	23.21	&	20.80	&	22.84	&	21.07	&	23.96	&	23.83	&	18.47	&	18.62	&	22.03	&	20.98	&	21.05	&	20.56	&	22.25	&	22.41	\\
\hline							
$\sigma$ & \multicolumn{8}{c|}{\emph{Man} } & \multicolumn{8}{c|}{\emph{Pepper} } \\
\hline											
	20	&	25.14	&	26.09	&	28.12	&	27.37	&	28.37	&	27.56	&	28.13	&	28.43	&	26.17	&	27.89	&	29.58	&	28.86	&	29.61	&	28.80	&	29.61	&	29.76	\\
	40	&	21.93	&	22.26	&	24.78	&	24.29	&	25.05	&	24.53	&	24.91	&	25.19	&	21.19	&	22.23	&	25.43	&	24.61	&	25.68	&	24.25	&	25.44	&	26.02	\\
	60	&	20.26	&	20.45	&	23.12	&	22.24	&	23.20	&	22.75	&	23.26	&	23.48	&	19.05	&	19.61	&	23.28	&	22.24	&	23.43	&	21.24	&	23.35	&	23.77	\\
	80	&	19.33	&	19.46	&	22.01	&	20.72	&	22.09	&	21.30	&	22.26	&	22.27	&	17.92	&	18.22	&	21.71	&	20.58	&	21.52	&	20.79	&	21.93	&	22.16	\\
	100	&	18.78	&	18.87	&	21.07	&	20.42	&	20.94	&	20.70	&	21.48	&	21.38	&	17.27	&	17.45	&	20.51	&	19.54	&	20.60	&	19.05	&	20.86	&	20.93	\\
\hline	
\end{tabular}
}
\label{table:standard_experiment}
\end{table*}

\begin{table}[t]
\centering
\caption{PSNR comparison between GLIDE and GSF. Results are averaged over 10 testing images.}
\small{
\begin{tabular}{cccc}
\hline
            & \hspace{-0.5cm}  GLIDE                    &  Ours                  &   \\
$\sigma$    & \hspace{-0.4cm} ($\mathrm{PSNR}_1$)       &  ($\mathrm{PSNR}_2$)   &   $\mathrm{PSNR}_2-\mathrm{PSNR}_1$\\
\hline
 20 	 & 	 28.15 	 & 	 28.89 	 & 	 +0.74\\
 40 	 & 	 24.73 	 & 	 25.38 	 & 	 +0.65\\
 60 	 & 	 22.75 	 & 	 23.60 	 & 	 +0.85\\
 80 	 & 	 21.51 	 & 	 22.39 	 & 	 +0.88\\
100 	 & 	 20.42 	 & 	 21.47 	 & 	 +1.05\\
\hline
\end{tabular}
}
\vspace{-4ex}
\label{table:improvement glide}
\end{table}

The full Sinkhorn-Knopp algorithm is implemented using GLIDE \cite{Talebi_Milanfar_2014}. GLIDE consists of multiple steps: It first determines the weight matrix, followed by a full Sinkhorn-Knopp algorithm that symmetrizes the weight matrix. Then, it incorporates an estimator to optimally determine the number of non-zero eigenvalues and the power of eigenvalues of the smoothing filter. GLIDE can use any denoising result as its pilot estimate. For the fairness of the experiment we follow the default setting of GLIDE and use the standard NLM as the pilot estimate. The result in Table~\ref{table:improvement glide} shows that in general GSF has at least 0.65dB improvement over GLIDE. This result is consistent with our observation that full Sinkhorn-Knopp is an incomplete EM algorithm.

\subsection{Comparison with NL-Bayes, PLE and EPLL}
Since the proposed GSF uses a Gaussian mixture model, we compare it with three other algorithms that also use Gaussian mixture models. These algorithms are the NL-Bayes \cite{Lebrun_Buades_Morel_2013}, the piecewise linear estimator (PLE) \cite{Yu_Sapiro_Mallat_2012} and the EPLL \cite{Zoran_Weiss_2011}.

Methodologically, there are some important differences between GSF, NL-Bayes, PLE and EPLL. NL-Bayes has a grouping procedure that groups similar patches, a process similar to BM3D. The grouped patches are used to estimate the empirical mean and covariance of a single Gaussian. In GSF, there is no grouping. The other difference is that the denoising of NL-Bayes is performed by a conditional expectation over the \emph{single} Gaussian. In GSF, the denoising is performed over all clusters of the GMM. Experimentally, we observe that GSF and NL-Bayes have similar performance, with NL-Bayes better in the low noise conditions and GSF better in the high noise conditions. One possible reason is that the grouping of NL-Bayes uses the standard Euclidean distance as a metric, which is not robust to noise.

PLE first estimates the model parameters using the noisy image. Then, for every patch, the algorithm selects a single Gaussian. The denoising is performed by solving a quadratic minimization and is performed for each patch individually. The algorithm iterates by improving the model parameters and the denoised estimate until convergence. GSF does not have this iterative procedure. Once the GMM is learned, the denoising is performed in closed-form. The other difference is that PLE requires a good initialization and is very sensitive to the initialization. Experimentally, we find that GSF performs better than PLE using a MATLAB code provided the authors of PLE. In this MATLAB code, the initialization was originally designed for image inpainting at a particular image resolution. Because of the sensitivity of PLE to the initializations, its performance on denoising is not particularly good. With a better initialization, we believe that PLE would improve. However, even so the gap between GSF and PLE will unlikely be significant because PLE performs worse that BM3D and EPLL.

The closest comparison to GSF is EPLL as both algorithms solve a whole-image MAP minimization with a Gaussian mixture model. To evaluate the difference between the two algorithms, we consider an experiment by feeding the noisy patches the EM algorithm to learn a GMM. The patch size is fixed at $5 \times 5$, and the number of clusters is fixed as $k = 100$. We repeat the experiment by inputting the denoised result of BM3D and the oracle clean image into the EM algorithm.

From Table~\ref{table:improvement epll}, we observe that EPLL with a noisy input performs poorly. The reason is that the original EPLL trains the GMM from 2 million clean patches. When feeded with noisy images, the GMM trained becomes a non-informative prior distribution. Moreover, in EPLL the GMM involves $(\pi_i,\vmu_i,\mSigma_i)$ whereas in GSF the GMM only involves $(\pi_i,\vmu_i)$. This is a significant reduction in the number of model parameters. When feeded with only a single image, there is insufficient training sample for EPLL.

Another observation from Table~\ref{table:improvement epll} is that the performance of EPLL depends heavily on the quality of the GMM. For example, if we use the result of BM3D as a pilot estimate for learning the GMM, the performance of EPLL is similar to the oracle case where we use the clean image. However, using BM3D as a pilot estimate is not a plausible approach because by running BM3D alone we can get an even higher PSNR (See Table~\ref{table:standard_experiment}). This result further shows the effectiveness of the proposed GSF for single image denoising.

\begin{table}[h]
\centering
\caption{Comparison with EPLL using different pilot estimates: ``Noisy'' uses the noisy image; ``BM3D'' uses the BM3D estimate; ``Clean'' uses the oracle clean image; ``External'' uses an external database. Testing image is ``House''.}
\small{
\begin{tabular}{cccccc}
\hline
         &   EPLL    &   EPLL    &   EPLL    &   EPLL       &   Ours\\
$\sigma$ &  (Noisy)  &   (BM3D)  &   (Clean) &   (External) &   \\
\hline
 20 	 & 	 25.40 	 & 	 32.41 	 & 	 32.46 	 & 	 32.47 	 & 	 32.92\\
 40 	 & 	 19.75 	 & 	 28.32 	 & 	 28.31 	 & 	 28.77 	 & 	 28.31\\
 60 	 & 	 16.42 	 & 	 25.73 	 & 	 25.80 	 & 	 26.58 	 & 	 26.05\\
 80 	 & 	 14.29 	 & 	 24.05 	 & 	 24.07 	 & 	 25.04 	 & 	 24.46\\
100 	 & 	 12.71 	 & 	 22.59 	 & 	 22.73 	 & 	 23.83 	 & 	 23.21\\
\hline
\end{tabular}
\vspace{-2ex}
}
\label{table:improvement epll}
\end{table}

\subsection{Complexity and Limitations}
Finally, we discuss the complexity and limitations of the proposed GSF.

GSF is a one-step denoising algorithm when $\gamma_{ij}$ and $\vmu_i^{(r)}$ are known. However, learning the GMM using the EM algorithm is time-consuming, and the complexity depends on the number of clusters $k$. In addition, since $k$ needs to be estimated through a cross-validation scheme, the actual complexity also depends on the number of cross-validation steps. To provide readers an idea of how $k$ changes with other system parameters, we conduct two experiments.

In Table~\ref{table:numcluster sigma} we show the number of clusters returned by the cross-validation scheme as we increase the noise level. As shown, the number of clusters increases when noise level reduces. This result is consistent with our intuition: As noise reduces, the grouping of patches becomes less important. In the limit when the image is noise-free, every patch will become its own cluster center. Therefore, one limitation of GSF is that for low-noise images the computing time could be very long. However, GSF is still a useful tool as its simple structure offers new insights to denoising.

Now, if we fix the noise level but change the image size, the complexity of GSF also varies. In Table~\ref{table:numcluster size}, we show the number of clusters as a function of image size. As a reference we also show the PSNR values of GSF and that of BM3D. The result in Table~\ref{table:numcluster size} indicates that the number of clusters increases with the image size. In the table, we also observe that BM3D performs worse than GSF for small images, but becomes better as image size increases.

\begin{table}[t]
\centering
\caption{Number of clusters returned by cross-validation as noise level increases. Test image is ``Man''. Size is $128 \times 128$.}
\small{
\begin{tabular}{c|ccccccccc}
\hline
$\sigma$ &
20&
30&
40&
50&
60&
70&
80&
90&
100\\
$k$&
1445&
667&
372&
243&
162&
125&
104&
83&
72\\
\hline
\end{tabular}
}
\label{table:numcluster sigma}
\end{table}

\begin{table}[t]
\centering
\caption{Number of clusters returned by cross-validation as image size increases. $\sigma = 40/255$. Test image is ``Man''.}
\small{
\begin{tabular}{cccc}
\hline
     &      & Ours  & BM3D\\
Size &  $k$ & PSNR  & PSNR\\
\hline
$50 \times 50$	    &	120	&	22.76	&	22.36	\\
$100 \times 100$	&	290	&	24.42	&	24.21	\\
$150 \times 150$	&	501	&	25.21	&	25.32	\\
$200 \times 200$	&	778	&	25.82	&	25.99	\\
$250 \times 250$	&	996	&	26.14	&	26.35	\\
$300 \times 300$	&	1322	&	26.58	&	26.83	\\
$350 \times 350$	&	1646	&	26.97	&	27.20	\\
$400 \times 400$    &   1966    &   27.26   &   27.49   \\
\hline
\end{tabular}
}
\label{table:numcluster size}
\end{table}

\begin{remark}[Subtraction of mean]
It is perhaps interesting to ask whether it is possible to subtract the mean before learning the GMM, as it could reduce the number of clusters. However, from our experience, we find that this actually degrades the denoising performance. If the GMM is learned from a collection of zero-mean patches, the denoising step in \eref{eq:gsf minimization} can only be used to denoise zero-mean patches. The mean values, which are also noisy, are never denoised. This phenomenon does not appear in EPLL (in which the GMM has a zero-mean) because the means are iteratively updated. We followed the same approach to iteratively update the means. However, we find that in general the denoising performance is still worse than the original GMM with means included. Further exploration on this would likely provide more insights into the complexity reduction issue.
\end{remark} 
\section{Conclusion}
\label{sec:conclusion}
Motivated by the performance gain due to a column normalization step in defining the smoothing filters, we study the origin of the symmetrization process. Previous studies have shown that the symmetrization process is related to the Sinkhorn-Knopp balancing algorithm. In this paper, we further showed that the symmetrization is equivalent to an EM algorithm of learning a Gaussian mixture model (GMM). This observation allows us to generalize various symmetric smoothing filters including the Non-Local Means (NLM), the one-step Sinkhorn-Knopp and the full Sinkhorn-Knopp, and allows us to geometrically interpret the performance gain phenomenon.

Based on our findings, we proposed a new denoising algorithm called the Gaussian mixture model symmetric smoothing filters (GSF). GSF is a simple modification of the denoising framework by using the GMM prior for the maximum-a-posteriori estimation. Equipped with a cross-validation scheme which can automatically determine the number of clusters, GSF shows consistently better denoising results than NLM, One-step Sinkhorn-Knopp and full Sinkhorn-Knopp. While GSF has slightly worse performance than state-of-the-art methods such as BM3D, its simple structure highlights the importance of clustering in image denoising, which seems to be a plausible direction for future research.

\section*{Acknowledgement}
We thank Dr. Guosheng Yu and Prof. Guillermo Sapiro for sharing the code of PLE \cite{Yu_Sapiro_Mallat_2012}. We also thank the anonymous reviewers for the constructive feedback that helps to significantly improve the paper. 

\appendix
\section{Proofs}

\subsection{Proof of Proposition 1}
Given an estimator $\vzhat$ of some observation $\vy$, the SURE is defined as
\begin{equation}
\mathrm{SURE} \bydef -\sigma^2 + \frac{1}{n}\|\vzhat - \vy\|^2 + \frac{2\sigma^2}{n}\mathrm{div}(\vzhat).
\label{eq:SURE}
\end{equation}
Substituting \eref{eq:zhat lambda} into \eref{eq:SURE} yields
\begin{align}
\frac{1}{n}\|\vzhat - \vy\|^2
&= \frac{1}{n}\left\| \frac{d}{d +\lambda}\vu + \frac{\lambda}{d +\lambda}\vy - \vy\right\|^2 \notag \\
&= \frac{1}{n}\left\| \frac{d}{d +\lambda}(\vu - \vy)\right\|^2 \notag \\
&= \sigmahat^2\left( \frac{d}{d+\lambda}\right)^2, \label{eq:sure 1}
\end{align}
where $\sigmahat^2 \bydef \frac{1}{n}\|\vu - \vy\|^2$. So it remains to determine $\mathrm{div}(\vzhat)$.

From \eref{eq:zhat lambda}, the divergence $\mathrm{div}(\vzhat)$ is
\begin{align*}
\mathrm{div}(\vzhat)
&= \frac{d}{d+\lambda}\mathrm{div}(\vu) + \frac{\lambda}{d+\lambda}\mathrm{div}(\vy)\\
&\bydef \frac{d}{d+\lambda} \sum_{j=1}^n \frac{\partial u_j}{\partial y_j} + \frac{\lambda}{d+\lambda} \sum_{j=1}^n \frac{\partial y_j}{\partial y_j}.
\end{align*}
To determine $\frac{\partial u_j}{\partial y_j}$, we note from \eref{eq:vu}, \eref{eq:vwj} and \eref{eq:em,mu} that
\begin{equation}
\vu = \frac{1}{d}\sum_{j=1}^n\mP_j^T \left( \sum_{i=1}^k\gamma_{ij} \left(\frac{\sum_{j=1}^n \gamma_{ij} \vy_j}{\sum_{j=1}^n \gamma_{ij}}\right) \right).
\end{equation}
Since
\begin{equation*}
\frac{\partial}{\partial y_j}\vy_j
= \frac{\partial}{\partial y_j}
\begin{bmatrix}
\vdots\\
y_{j-1}\\
y_{j}\\
y_{j+1}\\
\vdots
\end{bmatrix}
=
\begin{bmatrix}
\vdots\\
0\\
1\\
0\\
\vdots
\end{bmatrix} = \ve_j,
\end{equation*}
it holds that
\begin{equation*}
\mathrm{div}(\vu)
=  \vone_{n \times 1}^T  \left( \frac{1}{d}\sum_{j=1}^n\mP_j^T \left(\sum_{i=1}^k\gamma_{ij} \left(\frac{\sum_{j=1}^n \gamma_{ij} \ve_j}{\sum_{j=1}^n \gamma_{ij}}\right)\right)\right).
\end{equation*}
and hence
\begin{equation}
\mathrm{div}(\vzhat)
= \sum_{j=1}^n \left( \frac{d}{d+\lambda} \mathrm{div}(\vu) + \frac{\lambda n}{d+\lambda}\right).
\label{eq:div zhat}
\end{equation}
Substituting \eref{eq:div zhat} and \eref{eq:sure 1} into \eref{eq:SURE} completes the proof.

\balance
\bibliographystyle{IEEEbib}
\bibliography{ref_gmm}
\end{document}